\ificcvfinal\pagestyle{empty}\fi
\title{Chordal Averaging on Flag Manifolds and Its Applications}
\author{Nathan Mankovich\\
Colorado State University
\and
Tolga Birdal\\
Imperial College London
}
\newcommand{\R}{\ensuremath{\mathbb{R}}}  
\newcommand{\Gr}{\ensuremath{\mathrm{Gr}}}  
\newcommand{\tr}{{\mathrm{tr}}}
\newcommand{\Y}{\mathbf{Y}}
\newcommand{\X}{\mathbf{X}}
\newcommand{\x}{\mathbf{x}}
\newcommand{\tb}{\mathbf{t}}
\newcommand{\U}{\mathbf{U}}
\newcommand{\M}{\mathbf{M}}
\newcommand{\V}{\mathbf{V}}
\newcommand{\z}{\mathbf{z}}
\newcommand{\m}{\mathbf{m}}
\newcommand{\I}{\mathbf{I}}
\newcommand{\Z}{\mathbf{Z}}
\newcommand{\Rot}{\mathbf{R}}
\newcommand{\zero}{\mathbf{0}}
\newcommand{\flag}{\mathcal{FL}}
\newcommand{\Id}{\mathbf{I}}
\newtheorem{remark}{Remark}
\newtheorem{prop}{Proposition}
\newtheorem{dfn}{Definition}
\renewcommand{\paragraph}[1]{{\vspace{1mm}\noindent \bf #1}.}
\DeclareMathOperator*{\argmin}{arg\min}
\crefname{eq}{eq}{eq}
\Crefname{Eq}{Eq}{Eq}
\crefname{thm}{theorem}{theorem}
\Crefname{Thm}{Theorem}{Theorem}
\crefname{prop}{Prop.}{Prop.}
\crefname{dfn}{Dfn.}{Dfn.}
\Crefname{Prop}{Proposition}{Proposition}
\crefname{remark}{remark}{remark}
\Crefname{Remark}{Remark}{Remark}
\Crefname{algorithm}{Alg.}{Alg.}
\begin{document}

\maketitle
\ificcvfinal\thispagestyle{empty}\fi

\begin{abstract}
This paper presents a new, provably-convergent algorithm for computing the flag-mean and flag-median of a set of points on a flag manifold under the chordal metric. The flag manifold is a mathematical space consisting of flags, which are sequences of nested subspaces of a vector space that increase in dimension. The flag manifold is a superset of a wide range of known matrix spaces, including Stiefel and Grassmanians, making it a general object that is useful in a wide variety computer vision problems.

To tackle the challenge of computing first order flag statistics, we first transform the problem into one that involves auxiliary variables constrained to the Stiefel manifold. The Stiefel manifold is a space of orthogonal frames, and leveraging the numerical stability and efficiency of Stiefel-manifold optimization enables us to compute the flag-mean effectively. Through a series of experiments, we show the competence of our method in Grassmann and rotation averaging, as well as principal component analysis.
We release our source code under \href{https://github.com/nmank/FlagAveraging}{https://github.com/nmank/FlagAveraging}.
\end{abstract}

\section{Introduction}\label{sec:intro}
Subspace analysis is a key workhorse of machine learning since various forms of data and parameter sets admit a compact representation as a subspace of a high-dimensional vector space. Diffusion imaging data~\cite{fletcher2007riemannian} or 
appearance variations of objects (\eg human faces) under variable lighting can be effectively modeled by low dimensional linear spaces~\cite{beveridge2008principal}, while a video as a whole can be modeled as the subspace that spans the observed frames~\cite{marrinan2014finding}. 

A large body of the aforementioned approaches leverage the mathematical framework of Grassmanian manifolds thanks to the ease in dealing with the confounding variability in observations~\cite{harandi2011graph,he2012incremental,hong2014geodesic,kumar2016robust}.  As such, they rely on statistical analysis tools inherently requiring mean or variance estimations on matrix manifolds~\cite{chakraborty2017intrinsic,chakraborty2015recursive,marrinan2014finding}. Yet, (i) they have been found to be susceptible to outliers, and (ii) while Grassmanians were suitable for analyzing \emph{tall} data where the ambient dimension is much larger than the number of data points, they become less effective when it comes to \emph{wide} data where the data dimension is relatively small~\cite{ma2021flag}. In such cases, the more structured \emph{flag manifolds} have been found to be more effective~\cite{ma2021flag}.

\begin{figure}[t]
        \includegraphics[width=\columnwidth]{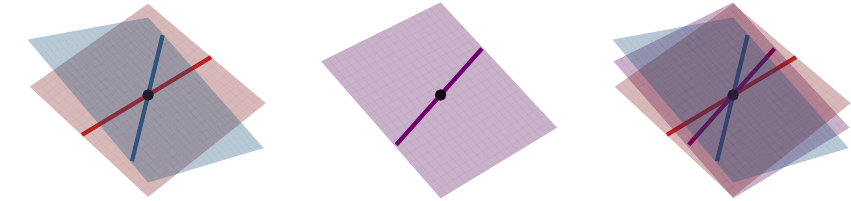}\vspace{-2mm}
	\caption{\textbf{Chordal averaging} on the \textbf{flag manifold} $\flag(1,2;3)$. The average (shown in purple) of the input (red and blue) lines remain in the average of the input planes.\vspace{-4mm}}
	\label{fig:flagavg}
\end{figure}

A flag manifold is a nested series of subspaces geometrically generalizing Grassmanians. Any multilevel, multiresolution, or multiscale phenomena is likely to involve flags, whether implicitly or explicitly. This makes flag manifolds instrumental in dimensionality reduction, clustering, learning deep feature embeddings, visual domain adaptation, deep neural network compression and dataset analysis~\cite{minnehan2019deep,ma2021flag,zhang2018grassmannian}. Thus, computing statistics on flag manifolds becomes an essential prerequisite powering several downstream applications. 
In this paper, we propose an approach for computing first order statistics on (\emph{oriented}) flag manifolds (\cf~Fig. \ref{fig:flagavg})\footnote{While our averages are for general flag-manifolds, we do provide oriented averages for flag manifolds of type $1,2,3,\dots,d-1$ in $d$-D space.}. In particular, endowing flag manifolds with the non-canonical chordal metric, we first transform the (weighted) \emph{flag-mean} problem into an equivalent minimization on the Stiefel manifold, the space of orthonormal frames, via the method of Lagrange multipliers. We then leverage Riemannian Trust-Region (RTR) optimizers~\cite{boumal2014manopt,birdal2019probabilistic} to obtain the solution. Subsequently, we introduce an iteratively reweighted least squares (IRLS) scheme to estimate the more robust \emph{flag-median} as an $L_1$ flag-mean. Finally, we show how several common problems in computer vision such as motion averaging, can be translated onto averages on flag manifolds using group contraction operators~\cite{ozyesil2018synchronization}. In particular, our contributions are:
\begin{itemize}[itemsep=0.5pt,leftmargin=*,topsep=1pt]
    \item We introduce a new algorithm for computing flag-prototypes (\eg flag-mean and -median) of a set of points lying on the flag-manifold.
    \item Analogous to our flag-mean, we introduce an IRLS minimization to estimate the flag-median.
    \item We prove the convergence of the proposed IRLS algorithm for the flag-median.
    \item We show how rigid motions can be embedded into flags and thus provide a new way to robustly average motions.
\end{itemize}
 Our diverse experiments reveal that flag averages are more robust, usually yield more reliable estimates, and are more general, \ie, generalize Grassmannian averages. We will release our implementations upon publication.

 


\section{Related Work}\label{sec:related}


\paragraph{Flag manifolds}
Besides being mathematically interesting objects~\cite{wiggerman1998fundamental,donagi2008glsms,alekseevsky1997flag}, flags and flag manifolds have been explored by a series of works from Nishimori~\etal addressing subspace independent component analysis (ICA) via Riemannian optimization~\cite{nishimori2006riemannian0, nishimori2006riemannian1, nishimori2007flag, nishimori2008natural,nishimori2007flag,nishimori2006riemannian}. Nested sequences of subspaces (e.g. flags) appear in 
the weights in principal component analysis (PCA) \cite{ye2022optimization} and the result of a wavelet transform \cite{kirby2001geometric}.


\paragraph{Flag manifolds in computer vision} The utilization of flag manifolds in computer vision is a recent development. Ma~\etal~\cite{ma2021flag} employ nested subspace methods to compare large datasets. Additionally, they port self-organizing mappings to work on flag manifolds, enabling parameterization of a set of flags of a fixed type. This method was applied to hyper-spectral image data analysis~\cite{ma2022self}. Ye~\etal~\cite{ye2022optimization} derive closed-form analytic expressions for the set of operators required for Riemannian optimization algorithms on the flag manifold, while Nguyen~\cite{nguyen2022closed} provides algorithms for logarithmic maps and geodesics on flag manifolds. Marrinan~\etal~\cite{marrinan2014finding} investigate the averaging of Grassmanians into flags, demonstrating that flag means behave more like medians and are therefore more robust to the presence of outliers among the subspaces being averaged. Building on this work, they utilize flag averages to improve the detection of chemical plumes in hyperspectral videos~\cite{marrinan2016flag}. Finally, Mankovich~\etal~\cite{mankovich2022flag} also average Grassmannians into flags by providing the median as a flag and an algorithm to compute it. 


\vspace{-1mm}\section{Chordal Centroids on Flag Manifolds}\vspace{-1mm}\label{sec:method}
We begin by providing the necessary definitions related to flag manifolds before presenting our chordal flag-mean and -median algorithms. 


\begin{dfn}[Matrix groups] The \textbf{orthogonal group} $O(d)$ denotes the group of distance-preserving transformations of a Euclidean space of dimension $d$. $SO(d)$ is the \textbf{special orthogonal group} containing matrices in $O(d)$ determinant $1$. The \textbf{Stiefel manifold} $St(k,d)$, a.k.a. the set of all orthonormal $k$-frames in $\R^d$, can be represented as the quotient group: $St(k,d) = O(d)/O(d-k)$. A point on the Stiefel manifold is parameterized by a tall-skinny $d \times k$ real matrix with orthonormal columns.
The \textbf{Grassmannian}, $Gr(k, d)$, represents the collection of points parameterizing the $k$-dimensional subspaces of a fixed $d$-dimensional vector space, \eg $\R^d$. For our purposes, $Gr(k, d)$ is a \emph{real matrix manifold}, where each point is identified with an \emph{equivalence class of orthogonal matrices}, \ie $Gr(k, d)=O(d)/O(k)\times O(d-k)$. \\
\textbf{Notation:} We represent $[\X] \in Gr(k,d)$ using the truncated orthogonal matrix $\X \in \R^{d \times k}$. For this paper $[\X]$ is used to denote the subspace spanned by the columns of $\X$.
\end{dfn}

\begin{dfn}[Flag]\label{def:flagobj}
A \emph{flag} in a finite dimensional vector space $\mathcal{V}$ over a field is a sequence of nested subspaces with \emph{increasing} dimension, each containing its predecessor, \ie the filtration: $\{\emptyset \}=\mathcal{V}_0 \subset \mathcal{V}_1 \subset \dots \subset \mathcal{V}_{k} \subset \mathcal{V}$ with $0=d_0<d_1<\dots<d_k<d_{k+1} = d$ where $\mathrm{dim} \mathcal{V}_i=d_i$ and $\mathrm{dim} \mathcal{V}=d$. We say this flag is of \emph{type} or \emph{signature} $(d_1, \dots ,d_k,d)$. A flag is called \emph{complete} if $d_i = i,\,\forall i$. Otherwise the flag is \emph{incomplete} or \emph{partial}. 

\noindent\textbf{\emph{Notation}:} A flag, $[\![\X]\!]$ of type $(d_1, \dots ,d_k, d)$, is represented by a truncated orthogonal matrix $\X \in \R^{d \times d_k}$. Let $m_j = d_j-d_{j-1}$ for $j=1,2,\dots, k+1$, and $\X_j  \in \R^{d \times m_j}$ for $j=1,2,\dots, k$ whose columns are the $d_{j-1}+1$ to $d_{j}$ columns of $\X$. $[\![\X]\!]$ is
\begin{equation}
     [ \X_1 ]  \subset [\X_1, \X_2] \subset \cdots \subset  [\X_1, \dots, \X_k] = [\X] \subset \R^d. \nonumber
\end{equation}
\end{dfn}

\begin{dfn}[Flag manifold]
The aggregate of all flags of the same type, \ie a certain collection of ordered sets of vector subspaces, admit the structure of manifolds. We refer to this \emph{flag manifold} 
as $\flag(d_1, ..., d_k; d)$ or equivalently as $\flag(d+1)$\footnote{Note that we will use $\flag(d_1, ..., d_k; d)$ and $\flag(d+1)$ interchangeably in the rest of the manuscript.}. The points of $\flag(d+1)$
parameterize all flags of type $(d_1, ..., d_k,d)$. 
Flag manifolds generalize Grassmannians because $\flag(k; d)=\Gr(k,d)$. $\flag(d+1)$ can be thought of as a quotient of groups ~\cite{ma2022self}:
\begin{equation*}
\flag(d+1) = SO(d)/S(O(m_1) \times O(m_2) \times \cdots \times O(m_{k+1})).
\end{equation*}
\end{dfn}

\begin{dfn}[Chordal distance on the flag manifold~\cite{pitaval2013flag}]\label{def:flagdist}
For $[\![\X]\!], [\![\Y]\!] \in \flag(d+1)$, the \emph{chordal distance} is a map $d_c: \flag(d+1) \times \flag(d+1) \to \R$:
\begin{equation}\label{eq: chordal distance}
    d_c([\![\X]\!], [\![\Y]\!]) := \sqrt{\sum_{j=1}^k m_j - \tr(\X_j^{\top} \Y_j \Y_j^{\top} \X_j)}.
\end{equation}
\end{dfn}
We now endow flags with \emph{orientation}, which is required in certain applications such as motion averaging. 
\begin{dfn}[Oriented flag manifold~\cite{selig2005study, ma2022self}]
An \emph{oriented flag manifold}, $\flag^+(d+1)$, contains only flags with subspaces with compatible orientations. Algebraically:
\begin{equation*}
    \flag^+(d+1) = SO(d)/(SO(m_1) \times \cdots \times SO(m_{k+1})). 
\end{equation*}
Two oriented vector spaces have the same orientation if the determinant of the unique linear transformation between them is positive~\cite{alberti2005geometric}.
\end{dfn}

\subsection{The Chordal Flag-mean}
Armed with notation for flags (Dfn. \ref{def:flagobj}) and ways to measure distance between them (Dfn. \ref{def:flagdist}), we are prepared to state the chordal flag-mean estimation problem formally.
\begin{dfn}[Weighted chordal flag-mean]
Let $\{ [\![\X^{(i)}]\!] \}_{i=1}^p \subseteq \flag(d+1)$ be a set of points on a flag manifold with weights $\{\alpha_i\}_{i=1}^p \subset \R$ where $\alpha_i \geq 0$. The chordal flag-mean $[\![\bm{\mu}]\!]$ of these points solves:
\begin{equation} \label{eq: chordal flag mean opt}
    \argmin_{[\![\Y]\!]\in \flag(d+1)} \sum_{i=1}^p \alpha_i d_c([\![\X^{(i)}]\!], [\![\Y]\!])^2.
\end{equation}
\end{dfn}
\textit{Note: for $\flag(k;n)$, this amounts to the Grassmannian-mean by Draper~\etal~\cite{draper2014flag}.}
\begin{prop}\label{prop: flag mean}
The chordal flag-mean optimization problem in Eq. \ref{eq: chordal flag mean opt} can be phrased as the Stiefel manifold optimization problem: 
\begin{equation} \label{eq: stiefel opt}
    \argmin_{\Y \in St(d_k,d)} \sum_{j=1}^k m_j -  \tr \left( \I_j \Y^{\top} \mathbf{P}_j  \Y \right). 
\end{equation}
where the matrices $\I_j$ and $\mathbf{P_j}$ are given below
\begin{equation}\label{eq: Ij}
    (\I_j)_{i,l} = 
    \begin{cases}
        1, & i = l \in \{ d_{j-1} + 1, 
 d_{j-1} + 2, \dots, d_j\} \\
        0, &\text{ otherwise}\nonumber \\
    \end{cases},
\end{equation}
\begin{equation}\label{eq: Pj}
    \mathbf{P}_j =  \sum_{i=1}^p \alpha_j \X_j^{(i)} {\X_j^{(i)}}^{\top}.
\end{equation}
\end{prop}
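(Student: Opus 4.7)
The plan is to rewrite the objective of \cref{eq: chordal flag mean opt} one summand at a time using \cref{def:flagdist}, interchange the sums over the data index $i$ and the block index $j$, and then bundle the block variables $\Y_1,\dots,\Y_k$ into a single Stiefel frame.

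First I would expand $d_c([[\X^{(i)}]],[[\Y]])^2 = \sum_{j=1}^k \bigl(m_j - \tr({\X_j^{(i)}}^T \Y_j \Y_j^T \X_j^{(i)})\bigr)$ and multiply through by $\alpha_i$. The leading portion $(\sum_i \alpha_i)\sum_j m_j$ is independent of $\Y$ and hence irrelevant for the argmin, so it can be replaced by the cosmetic $\sum_j m_j$ appearing in \cref{eq: stiefel opt}. For the data-dependent part I would apply trace cyclicity to write $\tr({\X_j^{(i)}}^T \Y_j \Y_j^T \X_j^{(i)}) = \tr(\Y_j^T \X_j^{(i)} {\X_j^{(i)}}^T \Y_j)$, then swap $\sum_i$ with $\sum_j$ so that the inner sum collapses into
$$\tr\!\left(\Y_j^T \mathbf{P}_j \Y_j\right), \qquad \mathbf{P}_j := \sum_{i=1}^p \alpha_i \X_j^{(i)}{\X_j^{(i)}}^T,$$
which matches the $\mathbf{P}_j$ of \cref{eq: Pj}.

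Next I would stitch the blocks back together as $\Y = [\Y_1 \mid \cdots \mid \Y_k] \in \R^{d\times d_k}$ and verify by a direct entrywise check that $\Y_j \Y_j^T = \Y \I_j \Y^T$, using that the diagonal selector $\I_j$ of \cref{eq: Ij} zeros out every column of $\Y$ outside the $j$-th block. A second use of trace cyclicity then gives $\tr(\Y_j^T \mathbf{P}_j \Y_j) = \tr(\mathbf{P}_j \Y \I_j \Y^T) = \tr(\I_j \Y^T \mathbf{P}_j \Y)$, which places the objective in the form of \cref{eq: stiefel opt}.

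The last step, and the one I expect to be the real obstacle, is translating the constraint from $\flag(d+1)$ to $St(d_k,d)$. Any representative of a flag has orthonormal columns, since each block $\Y_j$ is orthonormal and the blocks are mutually orthogonal (the subspaces strictly nest), so $\Y \in St(d_k,d)$; conversely, any Stiefel frame partitioned into blocks of sizes $m_1,\dots,m_k$ yields a point of $\flag(d+1)$. The integrand depends on $\Y$ only through the block projectors $\Y_j \Y_j^T$, which are invariant under the right action of $S(O(m_1)\times\cdots\times O(m_k))$ defining the flag equivalence, so the function descends to a well-defined functional on $\flag(d+1)$ whose minimizers are exactly the images of the minimizers of the Stiefel problem. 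One must be careful here that lifting to $St(d_k,d)$ does not secretly enlarge the search space: this is precisely what the fiber invariance rules out, and once it is in place the two optimization problems share their argmin up to the flag equivalence.
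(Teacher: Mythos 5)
Your proposal is correct and follows essentially the same route as the paper's proof: expand the chordal distance, discard the constant $(\sum_i \alpha_i)\sum_j m_j$ term, collapse the data sum into $\mathbf{P}_j$, use $\Y_j\Y_j^T = \Y\I_j\Y^T$ together with trace cyclicity, and recast the flag constraint as a Stiefel constraint. Your final paragraph on fiber invariance under $S(O(m_1)\times\cdots\times O(m_k))$ is in fact a slightly more careful justification of the lift to $St(d_k,d)$ than the paper gives, which simply asserts that the flag constraints reduce to $\Y^T\Y=\I$.
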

\begin{proof}[Proof sketch]
We use truncated orthogonal representations for points on the Stiefel and flag manifolds.
By the equivalence of minimization problems we write Eq.~\ref{eq: chordal flag mean opt} as
\begin{equation*}
 \argmin_{\Y \in St(d_k,d)}\sum_{j=1}^k  m_j -  \sum_{j=1}^k \sum_{i=1}^p \alpha_i \tr\left( \Y_j^{\top}\X_j^{(i)} {\X_j^{(i)}}^{\top}\Y_j \right).
\end{equation*}
$\I_j$ allows us to write $\Y_j \Y_j^{\top} = \Y \I_j \Y^{\top}$. Using this, properties of trace, and our definition of $\mathbf{P}_j$ we write Eq.~\ref{eq: chordal flag mean opt} as Eq.~\ref{eq: stiefel opt}.

\end{proof}
We provide the full proof in 
the appendix. We now extend the chordal mean to the case of a certain family of \emph{complete} and \emph{oriented} \emph{flags}.




\begin{prop}\label{prop:eucmean}
Let $ \{ \x^{(i)} \}_{i=1}^p \subset \R^d$. Then suppose ${\x^{(i)}}^{\top} \x^{(j)} > 0$ for all $i,j$. Then the naive Euclidean mean $\bm{z} = \frac{1}{n}\sum_{i=1}^p\x^{(i)}$ has the same orientation as each $\x^{(i)}$. 
\end{prop}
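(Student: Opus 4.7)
The plan is to reduce the orientation condition to a positivity condition on an inner product and then verify it by a one-line calculation. For two nonzero vectors in $\R^d$, the oriented lines they span share the same orientation (in the sense of \cite{selig2005study,alberti2005geometric} used in the paper) precisely when their inner product is strictly positive, i.e.\ when they lie in a common open half-space. I would open the proof by making this equivalence explicit, so that the determinantal definition of same-orientation gets replaced by the concrete condition $\bm{z}^T\x^{(i)} > 0$ for every $i$.

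With this translation in hand, the rest of the proof is essentially immediate from linearity of the inner product. I would expand
\begin{equation*}
\bm{z}^T\x^{(i)} \;=\; \frac{1}{n}\sum_{j=1}^{p}{\x^{(j)}}^T\x^{(i)},
\end{equation*}
and check that every summand on the right is strictly positive. The off-diagonal terms ($j\neq i$) are positive directly by the standing hypothesis ${\x^{(i)}}^T\x^{(j)} > 0$. The diagonal term $\|\x^{(i)}\|^2$ is positive by applying the hypothesis with $j=i$, which incidentally also guarantees $\x^{(i)} \neq \bm{0}$ (so the span $[\x^{(i)}]$ is a well-defined oriented line). Summing yields $\bm{z}^T\x^{(i)} > 0$ for each $i$, which is the desired orientation compatibility.

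I do not anticipate a genuine obstacle. The only subtlety worth one sentence of justification is the identification between the determinantal same-orientation criterion from the definition of $\flag^{+}(d+1)$ and the elementary positive-inner-product characterization for oriented $1$-dimensional subspaces; once this is noted, the remainder is a two-line computation. Geometrically, the statement is simply that any finite collection of vectors contained in a common open half-space has its Euclidean mean in the same half-space, which is a natural consistency result that justifies using the naive mean as the ``orientation representative'' in the complete-flag setting of the subsequent construction.
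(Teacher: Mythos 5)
Your proof is correct and is essentially the paper's own argument: both expand $\bm{z}^T\x^{(i)}$ by linearity and conclude positivity term by term from the hypothesis. The extra remarks you add (identifying same-orientation for $1$-dimensional subspaces with positive inner product, and noting $\x^{(i)}\neq\bm{0}$) are reasonable clarifications the paper leaves implicit, but they do not change the route.
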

\begin{proof}
    The proof follows from the simple derivation:
\begin{align*}
{\x^{(j)}}^{\top} \bm{z} = {\x^{(j)}}^{\top} \frac{1}{n}\sum_{i=1}^p\x^{(i)}
= \frac{1}{n}\sum_{i=1}^p {\x^{(j)}}^{\top} \x^{(i)} > 0.
\end{align*}
\end{proof}
\begin{dfn}[$\flag^+(1,\dots,d-1;d)$ chordal flag-mean]\label{def:reorient}
Let $\{[\![\X^{(i)}]\!]\}_{i=1}^p \subset \flag(1,2,\dots,d-1;d)$  where for each $j$ and any $i$ and $k$, ${\X_j^{(i)}}^{\top}{\X_j^{(k)}} > 0$. Let $[\![\bm{\mu}]\!]$ be the chordal flag-mean (e.g., Eq.~\ref{eq: chordal flag mean opt}) and $\bm{z}_j$ be the Euclidean mean of $\{ \X_j^{(i)}\}_{i=1}^p \in \R^d$. Then the oriented chordal flag-mean is defined as $[\![\bm{\mu}^+]\!] \in \flag^+(1,\dots,d-1;d)^+$:
\begin{equation}\label{eq: orientation ex}
    \bm{\mu}^+_j =
    \begin{cases}
        \Y_j, & \bm{z}_j^{\top} \Y_j \geq 0\\
        -\Y_j, & \text{otherwise}.
    \end{cases}
\end{equation}
\end{dfn}
\begin{remark}
The ordering of the columns of $\bm{\mu}$ is the same as that of each $\X^{(i)}$ because the chordal distance on the flag manifold respects the ordering of the vectors in the flag representation by only comparing $\bm{\mu}_j$ to $\X_j^{(i)}$. So, we only need to correct for the sign of the columns of $\bm{\mu}$.
By Prop.~\ref{prop:eucmean}, we know that the Euclidean mean, $\bm{z}$, 
has the same orientation as each of $\X_j^{(i)}$. We use Eq.~\ref{eq: orientation ex} to force $\bm{\bm{z}}_j^{\top}\bm{\mu}_j^* \geq 0$.
Dfn. \ref{def:reorient} gives us a way to choose which chordal flag-mean representatives are best for averaging representations of motions in $\flag^+(1,2,3;4)$ in Sec.~\ref{sec:motavg}.
\end{remark}




\begin{algorithm}[t]
\setstretch{1.13}
\caption{Chordal flag-mean.}\label{alg:chordalavg}
 \textbf{Input}: {Set of points on a flag manifold $\{[\![\X^{(i)}]\!]\}_{i=1}^p$}\\
 \textbf{Output}: Chordal flag-mean {$[\![\bm{\mu}]\!]$} \\[0.25em]
 Initialize $[\![\bm{\mu}]\!]$\\
 Compute projections ${\{\mathbf{P}_i\}_{i=1}^k}$ as in Eq.~\ref{eq: Pj}\\
 Define ${\{\mathbf{I}_i\}_{i=1}^k}$ as in~\cref{eq: Ij}\\
 Optimize~\cref{eq: stiefel opt} using Stiefel RTR to find $[\![\bm{\mu}]\!]$
\end{algorithm}

To compute the proposed mean, we optimize Eq.~\ref{eq: stiefel opt} via RTR methods~\cite{absil2007trust,boumal2014manopt} and re-orient the mean using Dfn.~\ref{def:reorient}.
\begin{remark}
    The geodesic distance averages on the Grassmannian (e.g. $\ell_2$-median and Karcher mean) are known to be unique only for certain subsets of the Grassmannian \cite{afsari2011riemannian}. The proof of this revolves around finding the region of convexity of the geodesic distance function and its square.
    Uniqueness for Grassmannian chordal distance averages (\eg the GR-mean \cite{draper2014flag} and -median \cite{mankovich2022flag}) is largely unstudied. It is known that the chordal distance on the Grassmannian approximates the geodesic distance, but its region of convexity 
    is an open problem to the best of our knowledge.
    Determining the convexity of our chordal flag-mean and -median would boil down to finding the region of convexity of the chordal distance function and its square on the flag manifold. Additionally, one could generalize geodesic distance averages to the flag manifold using Riemannian operators on flags~\cite{ye2022optimization}, find an algorithm to compute them and their region of convexity. We leave these projects to future work.
\end{remark}

\subsection{The Chordal Flag-median}
We are now ready to provide our iterative algorithm for robust centroid estimation.
\begin{dfn}[Weighted chordal flag-median]
Let $\{ [\![\X^{(i)}]\!] \}_{i=1}^p \subseteq \flag(d+1)$ be a set of points on a flag manifold with weights $\{\alpha_i\}_{i=1}^p \subset \R$ where $\alpha_i \geq 0$. The chordal flag-median, $[\![\bm{\eta}]\!]$, of these points solves
\begin{equation} \label{eq: chordal flag median opt}
    \argmin_{[\![\Y]\!] \in \flag(d+1)} \sum_{i=1}^p \alpha_i d_c([\![\X^{(i)}]\!], [\![\Y]\!]).
\end{equation}
\end{dfn}
\textit{Note: for $\flag(k;n)$, this amounts to the Grassmannian-median by Mankovich~\etal~\cite{mankovich2022flag}.}


\begin{prop}\label{prop: flag median}
The flag-median optimization problem in Eq.~\ref{eq: chordal flag median opt} can be phrased with weights $w_i([\![\Y]\!])$ in: 
\begin{equation}\label{eq: irls weights}
    w_i([\![\Y]\!]) =  \frac{\alpha_i}{\max\{d_c([\![\X^{(i)}]\!], [\![\Y]\!]), \epsilon\}},
\end{equation}
\begin{equation}\label{eq: median equiv}
 \argmin_{[\![\Y]\!] \in \flag(d+1)}\sum_{i=1}^p \sum_{j=1}^k m_j - w_i([\![\Y]\!]) \tr\left( \Y_j^{\top} \X_j^{(i)}{\X_j^{(i)}}^{\top} \Y_j \right).
\end{equation}
where $\epsilon = 0$ as long as $d_c([\![\X^{(i)}]\!], [\![\Y]\!]) \neq 0$ for all $i$.
\end{prop}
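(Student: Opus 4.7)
The plan is to mirror the proof of~\cref{prop: flag mean}, preceded by an IRLS reformulation step that converts the $L_1$ objective into a weighted $L_2$ objective whose weights depend on the current estimate. First, I would apply the algebraic identity $x = x^2/x$ (for $x > 0$) to each summand of~\cref{eq: chordal flag median opt}: $\alpha_i d_c([[\X^{(i)}]], [[\Y]]) = (\alpha_i / d_c) \cdot d_c^2 = w_i([[\Y]]) \cdot d_c^2$, using the definition of $w_i$ in~\cref{eq: irls weights}. Summing over $i$ rewrites the flag-median objective as $\sum_i w_i([[\Y]]) \, d_c([[\X^{(i)}]], [[\Y]])^2$. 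The $\epsilon$ in the $\max$ of~\cref{eq: irls weights} is only a numerical safeguard against vanishing distances; when each $d_c([[\X^{(i)}]], [[\Y]])$ is strictly positive, $\epsilon = 0$ reproduces the identity exactly, as the proposition claims.

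Next, I would expand the squared chordal distance using~\cref{def:flagdist} and invoke cyclicity of the trace, exactly as in the proof of~\cref{prop: flag mean}, to get $d_c([[\X^{(i)}]], [[\Y]])^2 = \sum_{j=1}^k \left[ m_j - \tr\!\left(\Y_j^T \X_j^{(i)} {\X_j^{(i)}}^T \Y_j\right) \right]$. Substituting this into the IRLS-rewritten objective yields $\sum_i \sum_j w_i([[\Y]]) \left[ m_j - \tr(\Y_j^T \X_j^{(i)} {\X_j^{(i)}}^T \Y_j) \right]$. The form printed in~\cref{eq: median equiv} differs only by replacing the constant $w_i m_j$ with $m_j$; since $w_i$ is held fixed at its current-iterate value within each IRLS subproblem, these constants are independent of $\Y$ and drop out of the $\argmin$.

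The main obstacle is the self-referential dependence of $w_i$ on $[[\Y]]$. The cleanest justification is through first-order stationarity: because $\nabla(\alpha_i d_c) = \alpha_i \nabla d_c^2 / (2 d_c) = \tfrac{1}{2} w_i \nabla d_c^2$, every critical point of the $L_1$ median objective is also a critical point of the weighted $L_2$ objective in~\cref{eq: median equiv} when $w_i$ is evaluated at that very point. This identification is what makes the IRLS iteration legitimate: alternating between freezing $w_i$ at the current iterate and minimizing the resulting weighted quadratic targets the correct fixed point. The chain rule applied to $d_c = \sqrt{d_c^2}$ is well-defined precisely on the set where $d_c > 0$, which is exactly where the $\max\{\cdot,\epsilon\}$ guard in~\cref{eq: irls weights} becomes relevant; the remaining algebra after this reduction is identical to that of~\cref{prop: flag mean}.
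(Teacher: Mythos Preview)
Your proof is correct and reaches the same conclusion, but by a genuinely different route than the paper. The paper proceeds by forming the Lagrangian of the constrained median problem with explicit orthogonality constraints $\tr(\Y_i^T\Y_j\Y_j^T\Y_i)=m_j\delta_{ij}$, taking gradients with respect to each block $\Y_j$ and each multiplier $\lambda_{ij}$, and then solving the resulting system to isolate $\sum_j m_j\lambda_{jj}$; maximizing that sum yields an expression equivalent to~\cref{eq: median equiv} after adding the constant $\sum_{i,j}m_j$. You instead apply the classical IRLS identity $x=x^2/x$ directly to each summand and justify the self-referential weights through the chain rule $\nabla d_c = \nabla d_c^2/(2d_c)$, showing that stationary points of the $L_1$ objective coincide with stationary points of the weighted $L_2$ objective when $w_i$ is evaluated at that point. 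Your derivation is the standard IRLS argument and is more elementary; the paper's Lagrangian route has the advantage of making the flag orthogonality constraints explicit and showing how they disappear at the critical point, which clarifies why the Stiefel reduction from~\cref{prop: flag mean} carries over. Both arguments operate at the level of first-order conditions, so neither actually establishes equality of global minimizers---only coincidence of stationary points, which is what the subsequent IRLS iteration in~\cref{prop: irls iteration} requires.
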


\begin{proof}[Proof sketch] 
We can encode the constraints and our optimization problem into the Lagrangian:
\begin{align}
\begin{aligned}
    \nabla_{\Y_j} \mathcal{L}
    &=  -2 \sum_{i=1}^p \frac{\alpha_i \X_j^{(i)}{\X_j^{(i)}}^{\top} \Y_j}{\sqrt{\sum_{j=1}^k m_j - \tr \left( {\X_j^{(i)}}^{\top} \Y_j \Y_j^{\top} \X_j^{(i)} \right) } } \nonumber\\
    &+2\sum_{j=1}^k \lambda_{i,j}\Y_i\Y_i^{\top} \Y_j,\\
     \nabla_{\lambda_{i,j}} \mathcal{L} &=  m_j \delta_{i,j} - \tr\left( \Y_i^{\top}\Y_j\Y_j^{\top} \Y_i \right).
\end{aligned}
\end{align}
Then we take the gradient of the Lagrangian with respect to $\Y_j$ and $\lambda_{i,j}$ and set it equal to zero. So, for each $j$, we have 
\begin{equation*}
4m_j \lambda_{j,j} =  \sum_{i=1}^p \frac{\alpha_i \tr \left( \Y_j^{\top} \X_j^{(i)}{\X_j^{(i)}}^{\top} \Y_j \right) }{d_c([\![\X^{(i)}]\!], [\![\Y]\!]) }.
\end{equation*}
Maximizing each $4m_j \lambda_{j,j}$ will minimize the objective function in Eq.~\ref{eq: chordal flag median opt}. We use equivalences of optimization problems to reformulate this maximization as Eq.~\ref{eq: median equiv}.
\end{proof}

\begin{prop}\label{prop: irls iteration}
 Fixing $[\![\Z]\!] \in \flag(d+1)$, Eq.~\ref{eq: median equiv}, with $w_i([\![\Z]\!])$, becomes
 \begin{equation*}
  \argmin_{[\![\Y]\!] \in \flag(d+1)}\sum_{i=1}^p \sum_{j=1}^k m_j - w_i([\![\Z]\!]) \tr\left( \Y_j^{\top} \X_j^{(i)}{\X_j^{(i)}}^{\top} \Y_j \right)
 \end{equation*}
 and is equivalent to a chordal flag-mean with weights $w_i([\![\Z]\!])$.
 Note: $\epsilon = 0$ as long as $d_c([\![\X^{(i)}]\!], [\![\Z]\!]) \neq 0$ for all $i$.
\end{prop}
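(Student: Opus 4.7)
The plan is to exploit that once $[[\Z]]$ is held fixed, each weight $w_i([[\Z]])$ from~\cref{eq: irls weights} reduces to a nonnegative real scalar $\widetilde{\alpha}_i := w_i([[\Z]])$, so the displayed objective becomes a function of $[[\Y]]$ alone. The whole statement then amounts to algebraically matching that objective with the weighted chordal flag-mean objective~\cref{eq: chordal flag mean opt}, modulo additive constants in $[[\Y]]$, and then appealing to~\cref{prop: flag mean}.

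First I would split the displayed objective at fixed $[[\Z]]$ into
\begin{equation*}
  p\sum_{j=1}^k m_j \;-\; \sum_{j=1}^k \tr\left( \Y_j^T \Big( \sum_{i=1}^p \widetilde{\alpha}_i\, \X_j^{(i)}{\X_j^{(i)}}^T \Big) \Y_j \right),
\end{equation*}
where the first summand is constant in $[[\Y]]$ and hence irrelevant to the $\argmin$. Second I would expand the weighted flag-mean objective~\cref{eq: chordal flag mean opt} using the chordal distance definition~\cref{eq: chordal distance} with the same weights $\widetilde{\alpha}_i$ and, invoking cyclicity of trace, rewrite it as
\begin{equation*}
  \sum_{i=1}^p \widetilde{\alpha}_i \sum_{j=1}^k m_j \;-\; \sum_{j=1}^k \tr\left( \Y_j^T \widetilde{\mathbf{P}}_j \Y_j \right),
\end{equation*}
where $\widetilde{\mathbf{P}}_j := \sum_{i} \widetilde{\alpha}_i \X_j^{(i)}{\X_j^{(i)}}^T$. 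Again the first summand is constant in $[[\Y]]$, so the two $\argmin$s coincide.

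To close, I would apply~\cref{prop: flag mean} with $\alpha_i$ replaced by $\widetilde{\alpha}_i$: it identifies the weighted flag-mean problem with the Stiefel optimization~\cref{eq: stiefel opt} in which $\mathbf{P}_j$ is replaced by $\widetilde{\mathbf{P}}_j$, which is exactly the form we just derived. The argument is essentially bookkeeping; the main potential obstacle is keeping careful track that the two additive constants $p\sum_j m_j$ and $\sum_i \widetilde{\alpha}_i \sum_j m_j$ are typically unequal, so the two optimization problems are equivalent only up to $\argmin$, not as equalities of objective values. One should also verify $\widetilde{\alpha}_i \geq 0$, which follows immediately from $\alpha_i \geq 0$ and the $\max\{\cdot,\epsilon\}$ denominator in~\cref{eq: irls weights}, confirming that the fixed-$[[\Z]]$ problem really is a legitimate instance of the weighted chordal flag-mean.
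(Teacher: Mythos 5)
Your proposal is correct and follows essentially the same route as the paper: fix $[[\Z]]$ so the weights become nonnegative scalars, rearrange the double sum so the $m_j$ terms split off as an additive constant, and recognize the remaining trace term as the weighted chordal flag-mean objective, which \cref{prop: flag mean} then converts to the Stiefel problem. Your explicit remark that the two additive constants $p\sum_j m_j$ and $\bigl(\sum_i \widetilde{\alpha}_i\bigr)\sum_j m_j$ differ but are both irrelevant to the $\argmin$, together with the check that $\widetilde{\alpha}_i \geq 0$, are small careful touches that the paper's terser proof leaves implicit.
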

\begin{proof}[Proof sketch] 
This follows from the proof of Prop.~\ref{prop: flag mean}.
\end{proof}

\begin{algorithm}[t]
\setstretch{1.13}
\caption{Chordal flag-median.}\label{alg:chordalmedian}
 \textbf{Input}: {Set of points on a flag manifold $\{[\![\X^{(i)}]\!]\}_{i=1}^p$}\\
 \textbf{Output}: Chordal flag-median {$[\![\bm{\eta}]\!]$} \\[0.25em]
 Initialize $[\![\bm{\eta}]\!]$\\
 \While{(not converged)}
 {
    Assign $w_i([\![\bm{\eta}]\!])$ using~\cref{eq: irls weights} (with $\epsilon>0$)\\
    $[\![\bm{\eta}]\!] \gets$ flag-mean($\{[\![\X^{(i)}]\!]\},\{w_i([\![\bm{\eta}]\!])\}$)
 }
\end{algorithm}
Prop.~\ref{prop: flag median} simplifies our optimization problem to Eq.~\ref{eq: median equiv}. Given an estimate for the chordal flag-median, $[\![\Z]\!]$, Prop.~\ref{prop: irls iteration} shows that solving a weighted chordal flag mean problem will approximate the solution to Eq.~\ref{eq: median equiv}. Using the propositions, we are now ready to present our iterative algorithm for flag-median estimation in Alg.~\ref{alg:chordalmedian}.

The convergence of Weiszfeld-type algorithms are well studied in the literature~\cite{aftab2015convergence,beck2015weiszfeld,zhao2020quaternion} and our IRLS algorithm for the chordal flag-median can be proven to  decrease its respective objective function value over iterations. This is what we establish next in Prop.~\ref{prop: flagirls decreasing}, inspired by the proof methods given in~\cite{beck2015weiszfeld}.

\begin{prop}\label{prop: flagirls decreasing}
Let $[\![\Y]\!] \in \flag(d+1)$. Suppose $d([\![\Y]\!],[\![\X^{(i)}]\!]) > \epsilon$ for $i = 1,2, \dots, p$. Also define the maps: $T:\flag(d+1) \rightarrow \flag(d+1)$ as an iteration of Alg.~\ref{alg:chordalmedian} and $f:\flag(d+1) \rightarrow \R$ as the chordal flag-median objective function value. Then
\begin{equation}
f(T([\![\Y]\!])) \leq f([\![\Y]\!]).
\end{equation}
\end{prop}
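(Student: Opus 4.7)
The plan is to prove monotone decrease via a majorize–minimize (MM) argument in the style of the Weiszfeld convergence proofs cited in the paper (\eg \cite{beck2015weiszfeld,aftab2015convergence}). First, I would introduce the auxiliary surrogate function
\[
g([[\Y]];[[\Z]]) \;=\; \tfrac{1}{2}\sum_{i=1}^{p}\alpha_i\!\left(\frac{d_c([[\X^{(i)}]],[[\Y]])^{2}}{d_c([[\X^{(i)}]],[[\Z]])} \;+\; d_c([[\X^{(i)}]],[[\Z]])\right),
\]
which is well defined since the hypothesis $d_c([[\Y]],[[\X^{(i)}]])>\epsilon$ keeps the denominators bounded away from zero. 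I would then verify the two standard MM properties: (i) tangency, $g([[\Z]];[[\Z]])=f([[\Z]])$, by direct substitution; and (ii) majorization, $g([[\Y]];[[\Z]])\ge f([[\Y]])$ for every $[[\Y]]\in\flag(d+1)$, which reduces term-by-term to the elementary inequality $\tfrac{a^{2}}{b}+b\ge 2a$ for $a,b>0$ (equivalent to $(a-b)^{2}\ge 0$), applied with $a=d_c([[\X^{(i)}]],[[\Y]])$ and $b=d_c([[\X^{(i)}]],[[\Z]])$.

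Next, I would observe that the term $\tfrac{1}{2}\sum_i\alpha_i d_c([[\X^{(i)}]],[[\Z]])$ is constant in $[[\Y]]$, so minimizing $g(\cdot;[[\Z]])$ over $\flag(d+1)$ is exactly the weighted chordal flag-mean problem with weights $w_i([[\Z]])=\alpha_i/d_c([[\X^{(i)}]],[[\Z]])$ of \cref{eq: irls weights}. By the definition of $T$ in \cref{alg:chordalmedian}, the update $T([[\Z]])$ is the output of the flag-mean subroutine on $\{[[\X^{(i)}]]\}$ with these very weights, so that $g(T([[\Z]]);[[\Z]])\le g([[\Z]];[[\Z]])$. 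Chaining tangency, majorization, and this minimization step gives
\[
f(T([[\Y]])) \;\le\; g(T([[\Y]]);[[\Y]]) \;\le\; g([[\Y]];[[\Y]]) \;=\; f([[\Y]]),
\]
which is exactly the desired inequality.

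The main obstacle is the middle inequality $g(T([[\Z]]);[[\Z]])\le g([[\Z]];[[\Z]])$, because the flag-mean subroutine in \cref{alg:chordalmedian} is realized by Stiefel RTR (\cref{prop: flag mean}) and generally only converges to a critical point, not a global minimizer. To close this cleanly, I would either (a) strengthen the statement to assume that the inner flag-mean call returns a global optimum of its weighted objective, or (b) initialize the RTR inner loop at $[[\Z]]$ itself, so that the monotone descent guarantee of trust-region methods yields the needed inequality automatically. Either route makes the MM chain rigorous; the rest of the argument is bookkeeping with the chordal distance and the weights in \cref{eq: irls weights}.
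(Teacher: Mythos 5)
Your proof is essentially the same argument the paper gives, just packaged in explicit majorize--minimize (MM) form: your surrogate $g(\cdot;[[\Z]])$ is the paper's $h(\cdot,[[\Z]])$ up to multiplication by $\tfrac12$ and addition of the constant $\tfrac12 f([[\Z]])$, your tangency/majorization steps are the paper's identities $h([[\Y]],[[\Y]])=f([[\Y]])$ and $\tfrac{a^2}{b}\ge 2a-b$, and the minimization step is the same use of $T$ as the weighted-mean minimizer. One thing you do that the paper does not: you explicitly flag that the chain needs the inner flag-mean call to return a global (or at least non-ascending) solution of the weighted problem, and you propose the standard fix of warm-starting the Stiefel RTR at the current iterate so its monotone descent property supplies the needed inequality -- a legitimate tightening of a step the paper takes for granted.
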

\begin{proof}[Proof sketch] 
We define the function 
\begin{equation}
h([\![\Z]\!],[\![\Y]\!]) = \sum_{i=1}^p w_i([\![\Z]\!]) d_c([\![\X^{(i)}]\!],[\![\Y]\!])^2.
\end{equation}
By definition of $h$, $T$, and $f$, we have
\begin{equation}
    h(T([\![\Y]\!]),  [\![\Y]\!]) \leq h([\![\Y]\!], [\![\Y]\!]) \leq f([\![\Y]\!]).\nonumber
\end{equation}
We use $h$ and $2a- b < \frac{a^2}{b}$ for $a,b \in \R$, $b > 0$ to find 
\begin{equation}
2f(T([\![\Y]\!]) - f([\![\Y]\!]) \leq h(T([\![\Y]\!]),  [\![\Y]\!]).\nonumber
\end{equation}
From our string of inequalities, we have the desired result. We leave the full proof to our supplementary material.
\end{proof}
\begin{remark}
The distance vanishes when $[\![\Y]\!] = [\![\X^{(i)}]\!]$ (e.g., $d_c([\![\Y]\!],[\![\X^{(i)}]\!]) =0$). In this case, Alg.~\ref{alg:chordalmedian} gets stuck at $[\![\X^{(i)}]\!]$ and the result in Prop.~\ref{prop: flagirls decreasing} becomes
\begin{equation}
f(T([\![\Y]\!])) \leq f([\![\Y]\!]) + {p \epsilon}/{2}.
\end{equation}
This singularity can be removed even for a general Weiszfeld iteration, simply by replacing the weights~\cite{aftab2014generalized}.
\end{remark}


\begin{prop}\label{cor: flagirls obj converges}
Let $[\![\Y_k]\!] \in \flag(d+1)$ be an iterate of Alg.~\ref{alg:chordalmedian} and $f:\flag(d+1) \rightarrow \R$ denote the chordal flag-median objective value. $f([\![\Y_k]\!])$ converges as $k \rightarrow \infty$ as long as $d_c([\![\Y]\!],[\![\X_i]\!]) >\epsilon$ for $i=1,2,\dots,p$ and each $k$.
\end{prop}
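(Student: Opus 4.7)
The plan is to combine the monotonic-descent property established in \cref{prop: flagirls decreasing} with the fact that the objective $f$ is bounded below, and then invoke the monotone convergence theorem for real sequences. Since $f$ is defined on the flag manifold itself, there is no need for anything like convergence of the iterates $[[\Y_k]]$ on $\flag(d+1)$; we only need convergence of the real-valued sequence $\{f([[\Y_k]])\}_{k \geq 0}$.

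First, I would observe that $f$ is bounded below by $0$. Indeed, by \cref{def:flagdist} the chordal distance $d_c$ is non-negative, and since the weights $\alpha_i$ are non-negative, so are the summands in \cref{eq: chordal flag median opt}. Hence $f([[\Y]]) \geq 0$ for every $[[\Y]] \in \flag(d+1)$.

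Next, the hypothesis that $d_c([[\Y_k]], [[\X^{(i)}]]) > \epsilon$ for every iterate $k$ and every $i$ is precisely what is required to apply \cref{prop: flagirls decreasing} at each step: the weights $w_i([[\Y_k]])$ are finite and well-defined, and the inequality
\begin{equation*}
    f([[\Y_{k+1}]]) = f(T([[\Y_k]])) \leq f([[\Y_k]])
\end{equation*}
holds for all $k$. Therefore $\{f([[\Y_k]])\}_{k \geq 0}$ is a non-increasing sequence of non-negative real numbers.

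Finally, any such sequence is convergent in $\R$ by the monotone convergence theorem, with limit equal to its infimum. This directly yields the claim. The main thing to verify carefully is the chain of hypotheses required to invoke \cref{prop: flagirls decreasing} at \emph{every} iterate rather than at a single one, but this is exactly what the hypothesis $d_c([[\Y_k]], [[\X^{(i)}]]) > \epsilon$ for all $k$ and $i$ guarantees; there is no genuine analytic obstacle beyond this bookkeeping.
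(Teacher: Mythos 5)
Your proposal is correct and follows essentially the same argument as the paper: the sequence $\{f([[\Y_k]])\}$ is non-negative and, by \cref{prop: flagirls decreasing}, non-increasing under the stated hypothesis, so it converges by the monotone convergence theorem. You merely spell out the verification of the hypotheses of \cref{prop: flagirls decreasing} at each iterate in more detail than the paper does.
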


\begin{proof}
 Notice that the real sequence with terms $f([\![\Y_k]\!]) \in \R$ is bounded below by $0$ and is decreasing by Prop.~\ref{prop: flagirls decreasing}. So it converges as $k \rightarrow \infty$.
\end{proof}




\section{Motion Averaging}
\label{sec:motavg}
In this section, we propose a method for motion averaging by leveraging novel definitions of averages on the flag manifold. This will also act as a good example of how to use flag manifolds for performing computations on other groups. To this end, we now define the group of $3$D rotations and translations, $SE(3)$. Then we outline how to navigate between points on $SE(3)$ and points on a flag. Finally, we describe our \emph{motion averaging on flag manifolds}. 

\begin{dfn}[3D motion]
The configuration (position and orientation) of a rigid body moving in free space can be described by a homogeneous transformation matrix $\M$ corresponding to the displacement from any inertial reference frame to another. The set of all such rigid body transformations in three-dimensions form the $SE(3)$ group:
\begin{equation}
\label{eq:se3}
SE(3) = \left \{\bm{\gamma}:=
\begin{bmatrix} 
\Rot & \tb \\ 
\zero^\top & 1 
\end{bmatrix}\,:\, \Rot\in SO(3) \,\mathrm{\,and\,}\, \tb\in\R^3
\right\},\nonumber
\end{equation}
where $\tb$ denotes a \emph{translation} (positional displacement) and $\Rot$ captures the angular displacements as an element of the special orthogonal group $SO(3)$:
\begin{equation}
\label{eq:so3}
SO(3) = \left \{
\Rot \in \R^{3\times 3} \colon \Rot^\top\Rot = \Id \,\wedge\, \det \Rot = 1
\right\}.
\end{equation}
\end{dfn}

\begin{prop}[Motion contraction~\cite{ozyesil2018synchronization}]\label{prop:se3so4} We call \\ $\Phi_{\lambda}:SE(3)\to SO(4)$ a \emph{Saletan contraction}, \ie $\Phi_{\lambda}(\bm{\gamma}) = \U \V^T$ where the left ($\U$) \& right ($\V$) singular vectors are obtained via the singular value decomposition:
\begin{equation}
\U \bm{\Sigma} \V^T = \begin{bmatrix} \Rot & \tb/\lambda  \\ 
\zero^\top & 1 \end{bmatrix} \text{ \emph{for} } \bm{\gamma} \in SE(3).
\end{equation}
\end{prop}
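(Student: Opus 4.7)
The plan is to verify that the definition $\Phi_\lambda(\bm{\gamma}) = \U\V^T$ actually lands in $SO(4)$, i.e., produces a special orthogonal $4\times 4$ matrix. Writing $M := \begin{bmatrix}\Rot & \tb/\lambda \\ \zero^\top & 1\end{bmatrix}$ and $M = \U\bm{\Sigma}\V^T$ for its (real) SVD, I need to establish two things: (i) $\U\V^T \in O(4)$, and (ii) $\det(\U\V^T) = +1$.

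For (i), since $\U$ and $\V$ are real orthogonal by construction of the SVD, a direct computation gives $(\U\V^T)(\U\V^T)^\top = \U\V^T\V\U^\top = \U\U^\top = \I_4$, which is the orthogonality claim. This step is essentially automatic.

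For (ii), the key observation is to evaluate $\det M$ using its block-triangular structure. Expanding along the last row (or equivalently using the Schur-complement/Laplace formula for the block $\begin{bmatrix}\Rot & *\\\zero^\top & 1\end{bmatrix}$) gives $\det M = \det \Rot = 1$, because $\Rot \in SO(3)$. In particular, $M$ is invertible, so every singular value of $M$ is strictly positive, hence $\det \bm{\Sigma} > 0$. From $\det M = \det \U \cdot \det \bm{\Sigma} \cdot \det \V^\top$ we conclude $\det \U \cdot \det \V = \det M / \det \bm{\Sigma} > 0$, and since $\U\V^T$ is orthogonal, $\det(\U\V^T) \in \{-1,+1\}$; the positivity forces $\det(\U\V^T) = +1$. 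Combined with (i), this yields $\Phi_\lambda(\bm{\gamma}) \in SO(4)$.

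There is no real obstacle here; the only subtlety worth flagging is that a non-unique SVD (when singular values coincide) could leave $\U$ and $\V$ individually ambiguous, but this ambiguity is always by an orthogonal factor that cancels in the product $\U\V^T$, so $\Phi_\lambda$ is still well-defined as a map into $SO(4)$. A one-line remark on this and an explicit check that $\lambda > 0$ (so that $\tb/\lambda$ is well-defined) complete the argument.
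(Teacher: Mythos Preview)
Your argument is correct: the block-triangular determinant computation gives $\det M=\det\Rot=1>0$, whence all singular values are positive and $\det(\U\V^T)=\det M/\det\bm{\Sigma}>0$, forcing $\det(\U\V^T)=+1$. The well-definedness remark is also right, and in fact the invertibility of $M$ (which you already used) is what guarantees that the SVD ambiguity always takes the form $\U\mapsto\U\Q$, $\V\mapsto\V\Q$ for the \emph{same} orthogonal $\Q$, so that $\U\V^T$ is canonical.

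As for comparison with the paper: there is nothing to compare against. The paper does not prove this proposition; it is stated as a definition imported from \cite{ozyesil2018synchronization} and used as a black box for mapping $SE(3)$ into $SO(4)$. Your write-up supplies the missing verification that $\Phi_\lambda$ actually lands in $SO(4)$, which is a useful addition rather than a duplication.
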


\begin{prop}[Inverse motion contraction~\cite{ozyesil2018synchronization}]\label{prop:so4tose3}
We call the inverse contraction map $\Phi_{\lambda}^{-1}: SO(4)\to SE(3)$. Let $\mathbf{M} \in SO(4)$, then $\bm{\gamma} = \Phi_{\lambda}^{-1}(\mathbf{M})$ where
\begin{align}
    \tb &=  \frac{2\lambda}{\mathbf{M}_{4,4}}\mathbf{M}_{1:3,4},\\
    \Rot &= \begin{cases}
    \mathbf{M}_{1:k, 1:k}, & 
\| \tb \|_2 < \epsilon\\
    \left( \mathbf{M}_{4,4}\frac{\tb \tb^T}{\| \tb\|_2^2} +  \mathbf{P'} \right)^{-1} \mathbf{M}_{1:k,1:k}, & \mathrm{o.w.}
    \end{cases},
\end{align}
and $\U \bf{\Sigma} \V^T = \tb^T$ is the SVD and $\mathbf{P'} = \V_{:,2:4}\V_{:,2:4}^T$.
\end{prop}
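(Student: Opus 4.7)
The plan is to verify the formulas by showing $\Phi_\lambda^{-1}(\Phi_\lambda(\bm{\gamma})) = \bm{\gamma}$ directly: start with $\bm{\gamma} = \begin{bmatrix}\Rot & \tb \\ \zero^T & 1\end{bmatrix} \in SE(3)$, compute $\M = \Phi_\lambda(\bm{\gamma}) = \U\V^T$ in closed form, then read off the relevant block entries of $\M$ and check they satisfy the claimed identities.

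First I would compute $\M$ via the polar decomposition $\A = \M\Q$ of $\A = \begin{bmatrix}\Rot & \tb/\lambda \\ \zero^T & 1\end{bmatrix}$, where $\Q = (\A^T\A)^{1/2}$ is the unique SPD square root and $\M = \A\Q^{-1}$. Setting $\mathbf{s} = \tb/\lambda$, $\sigma = \|\mathbf{s}\|$, and $\mathbf{u} = \Rot^T \mathbf{s}$, a direct calculation yields
\begin{equation*}
\A^T\A = \begin{bmatrix}\I & \mathbf{u} \\ \mathbf{u}^T & 1+\sigma^2\end{bmatrix}.
\end{equation*}
The key observation is that $\A^T\A$ acts as the identity on the 2D subspace $\{(\mathbf{v},0) : \mathbf{v}^T\mathbf{u}=0\}$ and, in the orthonormal basis $\{(\mathbf{u}/\sigma, 0), \mathbf{e}_4\}$ of the remaining 2D subspace, restricts to $\B = \begin{bmatrix}1 & \sigma \\ \sigma & 1+\sigma^2\end{bmatrix}$, which has determinant $1$. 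This makes the $2\times 2$ inverse square root explicit: $\B^{-1/2} = (4+\sigma^2)^{-1/2}\begin{bmatrix}\sigma^2+2 & -\sigma \\ -\sigma & 2\end{bmatrix}$.

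Next I would read off $\M = \A\Q^{-1}$ by applying it to each $\mathbf{e}_j$, decomposing into projections onto the two invariant subspaces. The last column collapses to $\M\mathbf{e}_4 = (4+\sigma^2)^{-1/2}(\mathbf{s}, 2)^T$, so $\M_{4,4} = 2/\sqrt{4+\sigma^2}$ and $\M_{1:3,4} = \mathbf{s}/\sqrt{4+\sigma^2}$. Eliminating $\sigma$ from these two identities recovers $\tb = 2\lambda\,\M_{1:3,4}/\M_{4,4}$. The upper-left block works out to $\M_{1:3,1:3} = (\I + c\,\mathbf{s}\mathbf{s}^T)\Rot$ with $c = (2-\sqrt{4+\sigma^2})/(\sigma^2\sqrt{4+\sigma^2})$.

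Finally I would match the formula for $\Rot$. The SVD of the row vector $\tb^T$ has $\V_{:,1} = \tb/\|\tb\|$, so $\mathbf{P}' = \I - \tb\tb^T/\|\tb\|^2$. A short substitution using $\|\tb\|^2 = \lambda^2\sigma^2$ gives $\M_{4,4}\tb\tb^T/\|\tb\|^2 + \mathbf{P}' = \I + c\,\mathbf{s}\mathbf{s}^T$, and inverting produces exactly the stated formula whenever $\|\tb\| \ge \epsilon$. The $\|\tb\|<\epsilon$ case is the limit $\sigma \to 0$, in which $c\,\mathbf{s}\mathbf{s}^T \to \zero$ and $\M_{1:3,1:3} \to \Rot$ directly. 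A brief check that $\det \M = \mathrm{sign}(\det \A) = +1$ confirms $\M \in SO(4)$. The hard part will be the matrix square root $(\A^T\A)^{-1/2}$; the trick that makes the entire computation explicit is exploiting the invariant-subspace structure so that only a $2\times 2$ SPD matrix of unit determinant remains.
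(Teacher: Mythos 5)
Your proof is correct, and it fills a gap the paper leaves open: \cref{prop:so4tose3} is stated without proof and attributed to \"Ozye\c{s}il et al.~\cite{ozyesil2018synchronization}, so there is no in-paper argument to compare against. Your derivation is a clean, self-contained verification via the polar decomposition $\A=\M\Q$ with $\Q=(\A^T\A)^{1/2}$, and the two load-bearing observations you make both check out. First, $\A^T\A=\bigl[\begin{smallmatrix}\I&\mathbf{u}\\ \mathbf{u}^T&1+\sigma^2\end{smallmatrix}\bigr]$ fixes every $(\mathbf{v},0)$ with $\mathbf{v}\perp\mathbf{u}$, so the nontrivial part of the square root lives in the plane $\mathrm{span}\{(\mathbf{u}/\sigma,0),\mathbf{e}_4\}$, where it is the $2\times2$ SPD matrix $\B$ with $\det\B=1$; that unit-determinant constraint is exactly what makes $\B^{\pm 1/2}=(\tr\B+2)^{-1/2}(\B^{\pm1}+\I)$ explicit, giving your formula for $\B^{-1/2}$. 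Second, $\Q^{-1}\mathbf{e}_4=(4+\sigma^2)^{-1/2}(-\mathbf{u},2)$ and $\Rot\mathbf{u}=\mathbf{s}$ yield $\M\mathbf{e}_4=(4+\sigma^2)^{-1/2}(\mathbf{s},2)$, from which $\tb=2\lambda\,\M_{1:3,4}/\M_{4,4}$ drops out, and the upper-left block reduces to $(\I+c\,\mathbf{s}\mathbf{s}^T)\Rot$ with precisely the $c$ that satisfies $\I+c\,\mathbf{s}\mathbf{s}^T=\M_{4,4}\,\tb\tb^T/\|\tb\|^2+(\I-\tb\tb^T/\|\tb\|^2)$. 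Two minor points worth flagging. (i) The statement's $\mathbf{P}'=\V_{:,2:4}\V_{:,2:4}^T$ is an index typo (for a $3\times3$ right-singular factor it should read $\V_{:,2:3}$); your reading $\mathbf{P}'=\I-\tb\tb^T/\|\tb\|^2$ is the intended orthogonal projector and is what makes the algebra close. (ii) You should state explicitly that $\det\Q=|\det\A|=1$, so $\det\M=\det\A=\det\Rot=+1$; your one-line ``$\det\M=\mathrm{sign}(\det\A)$'' is true but it is worth the extra clause. With those cosmetic fixes the argument is complete, including the degenerate $\|\tb\|<\epsilon$ branch handled as the $\sigma\to 0$ limit.
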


\begin{algorithm}[t]
\setstretch{1.13}
\caption{Motion averaging on Flag manifolds.}\label{alg:motionavg}
 \textbf{Input}: {Motions $\{\bm{\gamma} \}_{i=1}^p \subset SE(3)$, scale $\lambda \in R$}\\
 \textbf{Output}: Average motion $\bm{\gamma^*} \in SE(3)$ \\[0.25em]
 Compute $\{\Phi_{\lambda}(\gamma_i)\}_{i=1}^p \subset SO(4)$ using~\cref{prop:se3so4}\\
 Compute $\left\{[\![\X^{(i)}]\!]\right\}_{i=1}^p \subset FL^+(1,2,3;4)\}$ from $\{\Phi(\gamma_i)\}_{i=1}^p$ using~\cref{prop:so4flag}\\
 \textbf{Mean}:$\quad [\![\Y^*]\!] \gets \text{flag-mean}\left( \left\{ [\![\X^{(i)}]\!] \right\}_{i=1}^p\right)$\\
 \textbf{Median}: $[\![\Y^*]\!] \gets \text{flag-median}\left(\left\{[\![\X^{(i)}]\!]\right\}_{i=1}^p\right)$\\
 Use~\cref{prop:FLtoSO4} to compute $\M^\star\in SO(4)$\\
 Use~\cref{prop:so4tose3} to compute $\Rot^\star\in SO(3)$ and $\tb^\star\in\R^3$
\end{algorithm}

\begin{prop}[Flag representation of motion~\cite{selig2005study}]\label{prop:so4flag}

Any contracted motion $\mathbf{M} \in SO(4)$ can be represented as a point on the flag, $[\![\X]\!] \in \flag^+(1,2,3;4)$ as the first $3$ columns of $\mathbf{M}$. Namely, $[\![\X]\!]$ is
\begin{equation}\label{eq: x flag def}
 \left[\mathbf{m_1}\right] \subset \left[\mathbf{m_1}, \mathbf{m_2}\right] \subset \left[\mathbf{m_1}, \mathbf{m_2}, \mathbf{m_3} \right] \subset \R^4.
\end{equation}
\end{prop}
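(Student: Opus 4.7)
The plan is to verify three things in succession: that the first three columns of $\mathbf{M}\in SO(4)$ form a valid Stiefel representative of a flag of type $(1,2,3;4)$, that the induced nested sequence is the one displayed in~\cref{eq: x flag def}, and finally that the resulting point lives in the \emph{oriented} flag manifold $\flag^+(1,2,3;4)$ rather than just $\flag(1,2,3;4)$.

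First I would write $\mathbf{M}=[\mathbf{m}_1,\mathbf{m}_2,\mathbf{m}_3,\mathbf{m}_4]$ and note that since $\mathbf{M}^T\mathbf{M}=\Id$, the columns form an orthonormal basis of $\R^4$. In particular, $\X=[\mathbf{m}_1,\mathbf{m}_2,\mathbf{m}_3]\in\R^{4\times 3}$ satisfies $\X^T\X=\Id_3$, so $\X\in St(3,4)$ is a valid truncated unitary representative in the sense of~\cref{def:flagobj}. With the block sizes $m_j=d_j-d_{j-1}=1$ for $j=1,2,3$, the partition $\X_j=\mathbf{m}_j$ exhibits the nesting
\begin{equation*}
[\mathbf{m}_1]\subset[\mathbf{m}_1,\mathbf{m}_2]\subset[\mathbf{m}_1,\mathbf{m}_2,\mathbf{m}_3]\subset\R^4,
\end{equation*}
with dimensions strictly increasing $1<2<3<4$ thanks to linear independence, giving a genuine flag of type $(1,2,3;4)$.

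Next I would handle the orientation. The key observation is that for signature $(1,2,3;4)$ every block has size one, so each factor $SO(m_j)$ in the quotient description $\flag^+(1,2,3;4)=SO(4)/(SO(1)\times SO(1)\times SO(1)\times SO(1))$ is trivial. Hence $\flag^+(1,2,3;4)\cong SO(4)$, and the assignment $\mathbf{M}\mapsto[[\X]]$ is simply the identification of $\mathbf{M}\in SO(4)$ with its first three columns: the fourth column $\mathbf{m}_4$ is the unique unit vector in $(\mathrm{span}\{\mathbf{m}_1,\mathbf{m}_2,\mathbf{m}_3\})^\perp$ with $\det[\mathbf{m}_1,\mathbf{m}_2,\mathbf{m}_3,\mathbf{m}_4]=+1$, so no information is lost by truncation. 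The induced orientation on each nested subspace comes from the ordered basis $\mathbf{m}_1,\ldots,\mathbf{m}_j$, which is compatible across the filtration by construction, placing $[[\X]]$ in $\flag^+(1,2,3;4)$.

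The only mildly subtle point, and thus the main obstacle, is the orientation bookkeeping: one must be careful that the flag constructed from the first three columns really lands in $\flag^+$ rather than the unoriented $\flag$. I would argue this is automatic because $\det\mathbf{M}=+1$ pins down the fourth column with a definite sign, and each subflag inherits its orientation from the ordering of the columns of $\X$. Everything else is a direct unpacking of~\cref{def:flagobj} and the quotient description of the oriented flag manifold.
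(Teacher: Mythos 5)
The paper states this proposition as a cited fact from Selig's \emph{The Study Quadric} and does not supply its own proof, so there is nothing to compare your argument against directly. Your verification is correct and self-contained: orthonormality of the columns of $\mathbf{M}$ gives a valid $St(3,4)$ representative, the trivial block sizes $m_j=1$ induce the nesting in~\cref{eq: x flag def}, and the observation that $\flag^+(1,2,3;4)\cong SO(4)$ (since every $SO(m_j)=SO(1)$ is trivial in the quotient) settles the orientation question. That last point is in fact exactly what the paper records in the remark immediately following the proposition — that $\flag^+(1,2,3;4)$ is homeomorphic to $SO(4)$ — and your identification that the fourth column is uniquely recovered from $\det\mathbf{M}=+1$, so no information is lost by truncation, is the right way to see why the map is well defined and invertible. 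The only cosmetic quibble: you could be slightly more explicit that the unoriented $\flag(1,2,3;4)$ is a nontrivial $8$-to-$1$ quotient of $SO(4)$ (by $S(O(1)^4)$, sign tuples with even parity), so landing in $\flag^+$ rather than $\flag$ is a genuine refinement and not a tautology, but your ``pinned down by $\det=+1$'' remark already conveys the essential point.
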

\begin{remark}
Note that the elements of the group of rigid body motions, $SE(3)$, which we represent by points on $SO(4)$, can be imagined as the points of a six-dimensional quadric in seven-dimensional projective space, $\mathbb{P}^7$, called the \emph{Study quadric}~\cite{selig2005study}. The well known dual quaternions are the very coordinates of this space. Such a bijection between $\mathbb{P}^{7}$ and $SO(4)$~\cite{nawratil2016fundamentals} is the reason why our free parameter $\lambda$ resembles the \emph{dual unit} $\varepsilon$ in dual quaternions~\cite{selig2005study,ablamowicz2004lectures,busam2016_iccvw}. 
Moreover, our flag manifold, $\flag^+(1,2,3;4)$ is homeomorphic to $SO(4)$. We leave the investigation of these deeper connections to future work.
\end{remark}

\begin{prop}[Motion representation of a flag \cite{selig2005study}]\label{prop:FLtoSO4}
Given $[\![\X]\!] \in \flag^+(1,2,3;4)$ with the same basis vectors from Prop.~\ref{prop:so4flag}, the corresponding point on $SO(4)$ is
\begin{equation}
\left[ \m_1, \m_2, \m_3, \z \right] \in SO(4),
\end{equation}
where $\z$ is found by running the Gram-Schmidt process to find a $4^\text{th}$ unit vector orthogonal to $\text{span}\{ \m_1, \m_2, \m_3\} $.
\end{prop}
\begin{figure}[t]
    \includegraphics[width=\linewidth, clip, trim={0 .12in 0 .12in}]{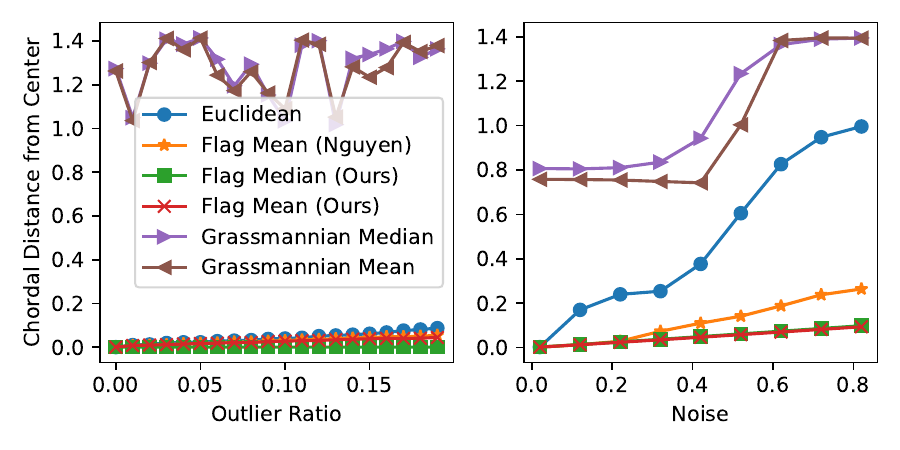}
    \caption{$100$ points from a synthetic data set on $\flag(1,3;10)$. The vertical axis is the chordal distance on $\flag(1,3;10)$ between the predicted averages and the ``center'' of the data set.\vspace{-4mm}}
    \label{fig:flagmeancompare}
\end{figure}

\subsection{Single Motion Averaging}
With these constructs, we are now ready to formally define the motion averaging problem for points on $SE(3)$.\vspace{0mm}
\begin{dfn}
Given a set of motions $\{\bm{\gamma}_i\in SE(3)\}_{i=1}^p$, the centroid is defined to be the solution of the following optimization procedure:
\begin{equation}
    \bm{\gamma}^* = \argmin_{\bm{\gamma}\in SE(3)} \sum_{i=1}^p \alpha_i \|\bm{\gamma}_i - \bm{\gamma}\|^q_\mathrm{F}
    \label{eq:motavg}
\end{equation}
where $q=2$ for mean estimation, $q=1$ for the median and $\alpha_i\in\mathbb{R}$ denote the individual weights.
\end{dfn}
To solve Eq.~\ref{eq:motavg}, we simply map each $\bm{\gamma}_i\in SE(3)$ to $\X^{(i)} \in FL(1,2,3;4)^+$. To this end, we first map each $\bm{\gamma}_i$ to $\phi_{\lambda}(\bm{\gamma}_i) = \M_i \in SO(4)$ via Prop.~\ref{prop:se3so4} and subsequently use Prop.~\ref{prop:so4flag} to represent $\M_i$ as 
$[\![\X^{(i)}]\!] \in FL(1,2,3;4)^+$. Then we use our flag-mean ($q=2$) or -median algorithm ($q=1$) to solve
\begin{equation}
    [\![\Y^*]\!] = \argmin_{[\![\Y]\!]\in FL(1,2,3;4)^+} \sum_{i=1}^p \alpha_i d_c([\![\X^{(i)}]\!], [\![\Y]\!])^q
    \label{eq:motavgflag}
\end{equation}
The desired solution $\bm{\gamma}^*\in SE(3)$ is then obtained by first mapping $[\![\Y^*]\!]$ back to $\M^*\in SO(4)$ via Prop.~\ref{prop:FLtoSO4} and subsequently using $\bm{\gamma}^* = \phi_{\lambda}^{-1}(\M^*)$ by Prop.~\ref{prop:so4tose3}. We present this chordal Flag motion averaging in Alg.~\ref{alg:motionavg}. 
\section{Results}\label{sec:results}
\subsection{Averaging on Flag Manifolds}
We first consider examples of data naturally existing as flags. 
We work with $5$ data sets: $2$ synthetic ones, MNIST digits \cite{deng2012mnist}, the Yale Face Database \cite{belhumeur1997eigenfaces}, and the Cats and Dogs dataset~\cite{yambor2000analysis}. We provide further evaluation of our flag averages that result in improved clustering on the UFC YouTube dataset~\cite{liu2009recognizing} in the supplementary material. In one synthetic experiment, we compare our Stiefel Riemannian Trust-Regions (RTR) method in Alg.~\ref{alg:chordalavg} for computing the flag-mean to the Flag RTR by Nguyen~\etal \cite{nguyen2022closed}. In the rest of the experiments, 
we compare our \textit{chordal} flag (FL)-mean \& -median to the Grassmannian (GR)-mean~\cite{draper2014flag} \& -median~\cite{mankovich2022flag}, as well as Euclidean averaging, where the matrices are simply averaged and projected onto the flag manifold via QR decomposition. 
GR-means and -medians, \cite{draper2014flag, mankovich2022flag} input data a points on Grassmannians by using the largest dimensional subspace in the flag ($[\X^{(i)}] \in \Gr(k,d)$) and output an average as a flag of type $(1,2,\dots,k,d)$. So all the methods considered in this section result in averages which live on a flag manifold. In this section we compare methods for data representation: the flag vs. Grassmannian vs. Euclidean space. 


\begin{table}[t]
\setlength{\tabcolsep}{3mm}
    \centering
    \begin{tabular}{l| c c}
         & Dist. to $\mathbf{C}$ & Obj. Fn. Value \\
         \toprule
        Ours & $(1.4 \pm 0.2) \times 10^{-4}$ & $(2.1 \pm 0.05) \times 10^{-4}$\\
         \cite{nguyen2022closed} & $(3.0 \pm 2.1) \times 10^{-3}$ & $(1.6 \pm 1.6) \times 10^{-3}$ \\
         \bottomrule
    \end{tabular}
    \caption{Robustness to initialization: Alg.~\ref{alg:chordalavg} versus Flag RTR from Nguyen~\etal \cite{nguyen2022closed}. Data: $100$ points on $\flag(1,2,3;10)$.\vspace{-4mm}}
    \label{tab:initrobust}
\end{table}

\paragraph{Synthetic data}
Both our synthetic experiments use the same methodology for generating data sets on the Grassmannian and flag. We begin by computing a ``center'' representative, $\mathbf{C} \in \R^{10 \times 3}$, as the first $3$ columns of the QR decomposition of a random matrix in $\R^{10 \times 3}$ with entries sampled from the uniform distribution over $[-.5,.5)$, $\mathcal{U}[-.5,.5)$. The representative for the $i^{\text{th}}$ data point, $\X_i$, is computed by sampling $\mathbf{Z}_i \in \R^{10 \times 3}$ with entries from $\mathcal{U}[-.5,.5)$ and defined as the first $3$ columns of the QR decomposition of $\mathbf{C} + \delta \mathbf{Z}_i$ for a noise parameter $\delta \geq 0$. 

\paragraph{Averaging synthetic flag data}
We use synthetic data sets with $100$ points, on $\Gr(3;10)$ and $\flag(1,3;10)$. For the left plot in Fig.~\ref{fig:flagmeancompare} we vary $\delta$ to compute our data sets. For the right plot we have $m$ outliers computed with $\delta = 1$ and the rest of the data are computed with $\delta = 0.001$. We compute the error as the chordal distance on $\flag(1,3;10)$ between the predicted average and $[\![\mathbf{C}]\!]$. In addition to comparing our averages to Grassmannian (GR) averages, we compare Alg.~\ref{alg:chordalavg} to Nguyen \etal~\cite{nguyen2022closed} for computing the flag-mean. Our results indicate that our algorithm improves both upon GR, Euclidean, and Nguyen~\etal~\cite{nguyen2022closed} averages in the sense that flag averages are closer to $[\![\mathbf{C}]\!]$. Specifically, our flag-median is more robust to outliers than our flag-mean. Note: Euclidean out preforms GR averaging because Euclidean averaging respects column order (e.g., the flag structure) for matrix representatives of the data, whereas GR averaging does not.  

\paragraph{Comparisons to Riemannian flag optimization}
In a second experiment, we compare the convergence of Alg.~\ref{alg:chordalavg} to that of Flag RTR~\cite{nguyen2022closed}. To this end, we generate $100$ points on $\flag(1,2,3;10)$ using $\delta = 0.001$ and run $50$ random trials with different initializations and compute $3$ items (i) the number of iterations to convergence, (ii) the chordal distance on $\flag(1,2,3;10)$ between the flag-mean and $[\![\mathbf{C}]\!]$, (iii) the cost function values from Eq.~\ref{eq: chordal flag mean opt}. We find that in every experiment Alg. \ref{alg:chordalavg} converges in \textit{2 iterations} and Flag RTR converges, on average, in $9.74 \pm 2.76$ iterations. In Tab.~\ref{tab:initrobust} we see that our method is one order of magnitude closer to the ground truth centroid $[\![\mathbf{C}]\!]$ and produces a one order of magnitude smaller objective function value.


\paragraph{Averaging under varying illumination}
To further demonstrate the efficacy of our averages over the standard Grassmanians, we leverage face images from Yale Face Database~\cite{belhumeur1997eigenfaces} with central ($c$), left ($l$), and right ($r$) illuminations, respectively. Let $\mathbf{A_c}, \mathbf{A_l}, \mathbf{A_r} \in \R^{243 \times 320}$ be these three images of a person.
We represent a face as a point $[\![\X]\!] \in \flag(1,3;d)$ as $[\![\X]\!] = [\X_1] \subset [\X] \subset \R^{d}$ and as $[\X] \in \Gr(3,d)$ using the following three steps: (i) Set $\mathbf{v_i} = \text{vec} \left(\mathbf{A_i} \right)$ for $i=c,l,r$; (ii) take $\X = \mathbf{Q}_{:,1:3}$ where $\mathbf{Q}$ is from the QR decomposition of $[\mathbf{v_c}, \mathbf{v_l}, \mathbf{v_r}]$.
Repeating this process for three faces gives us three points: $[\X_1],[\X_2],[\X_3] \in \Gr(3,d)$ and $[\![\X_1]\!],[\![\X_2]\!],[\![\X_3]\!] \in \flag(1,3;d)$.
Then we calculate the Grassmannian-mean of the points in $\Gr(3,d)$ which is the flag: $[\![\bm{\nu}]\!] = [\bm{\nu}_1] \subset [\bm{\nu}_1, \bm{\nu}_2] \subset [\bm{\nu}_1,\bm{\nu}_2,\bm{\nu}_3]$
and the flag-mean (ours) of the points in $\flag(1,3;d)$: $[\![\bm{\mu}]\!]  = [\bm{\mu}_1] \subset [\bm{\mu}_1,\bm{\mu}_2,\bm{\mu}_3] $.
A plot of reshaped $\bm{\mu}_1$ and $\bm{\nu}_1$ for a set of three faces in Fig.~\ref{fig:face_averages1}. We would expect the first dimension of both means to look like a face with center illumination. However, only the flag-mean appears to be center-illuminated. 

\begin{figure}[t]
        \includegraphics[width=\columnwidth]{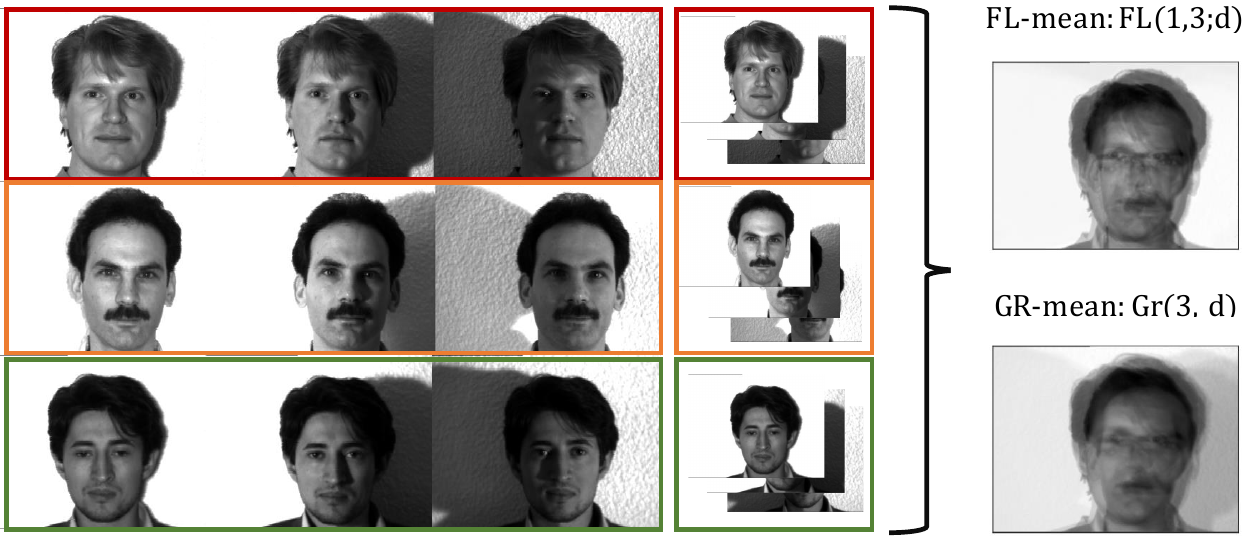}
	\caption{Averaging a collection of faces belonging to three different people, captured under varying illumination: center, left and right. Notice that the first dimension of the flag representations is center illuminated, better representing the mean compared to Grassmannian.\vspace{-4.5mm}}
	\label{fig:face_averages1}
\end{figure}

\paragraph{MNIST representation}
We run two experiments similar to what was done in~\cite{mankovich2022flag} with MNIST digits. However, our representations differ since we represent a digit as $[\X_j] \in \Gr(2, 784)$ and $[\![\X_j]\!] \in \flag(1,2;784)$. We generate $p$ representations of a digit, $\{\X_j\}_{j=1}^p$, by sampling a set of $p$ images without replacement from the test partition. Then we vectorize each image into $\mathbf{v}_j \in \R^{784}$ and run $k$ nearest neighbors on $\{\mathbf{v_j}\}_{i=1}^p$ with $k=2$ using the cosine distance. 
Say $\mathbf{v}_j$ and $\mathbf{v}_k$ are the $2$ nearest neighbors of $\mathbf{v}_j$, then the representation for sample $j$ is $\X_j = \mathbf{Q}_{:,:2}$ from the QR decomposition of $[\mathbf{v}_j, \mathbf{v}_k]$.

\paragraph{Robustness to Neural Network (NN) predictions} For the first MNIST experiment, we use the method above to create $20$ data sets on $\Gr(2, 784)$ and $\flag(1,2; 784)$ 
 corresponding to $i=0,1,2,\dots, 19$. The $i$th data set contains $20$ representations of the digit $1$ and $i$ representations for the digit $9$. We calculate a GR-mean and -median of each of the $i$ data sets on $\Gr(2, 784)$ and our flag-mean and -median for the data sets on $\flag(1, 2, 784)$. Note: all of these averages live on $\flag(1, 2, 784)$.
We then use a NN (trained on the original training data and producing a $97\%$ test accuracy on the original test data) to predict the label of the first dimension of each average for $i=0,1,2,\dots, 19$. As plotted in Fig.~\ref{fig: MNIST}, the NN incorrectly predicts the class of the GR-mean and -median for each data set. In contrast, the flag-mean and -median are all predicted as $1$s with data sets with fewer than $11$ representations of the $9$s digits. The flag-mean is the first flag average to be incorrectly predicted, since it is not as robust to outliers as the flag-median.

\begin{figure}[t]
    \includegraphics[width=\linewidth]{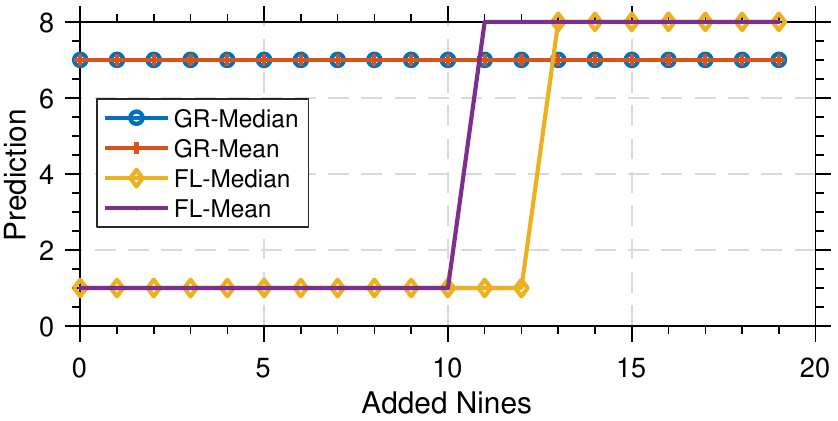}
    \caption{Neural network predictions for the first dimension of different averages $i=0,1,2,\dots,19$ MNIST data sets. The $i$th data set has $i$ representations of the $9$s digit and $20$ representations of the $1$s digit.\vspace{-5mm}}
    \label{fig: MNIST}
\end{figure}

\paragraph{Visualizing robustness}
Our second MNIST experiment is with $20$ representations of $6$s and with $i$ outlier representations of $7$s for $i=0,4,8,12$. We use the workflow from Fig.~\ref{fig: MNIST} to represent the MNIST digits on $\Gr(2,748)$ and $\flag(1,2;748)$. For each $i$, we compute averages of a data set with $i$ representations of $7$s. A chordal distance matrix on $\flag(1,2;798)$ between all the averages and data is used to preform Multidimensional Scaling (MDS)~\cite{kruskal1978multidimensional} for visualization in Fig.~\ref{fig:mnist_outliers}. 
The best averages should barely move (right to left) as we add outlier representations of $7$s. Our flag-mean and -median are moved the least with the addition of representations of $7$s with the median moving less than the mean. In contrast, the Grassmannian-mean and -median \cite{mankovich2022flag} move more than the compared baselines as we add $7$s.

\begin{figure}[t]
    \includegraphics[width=\linewidth]{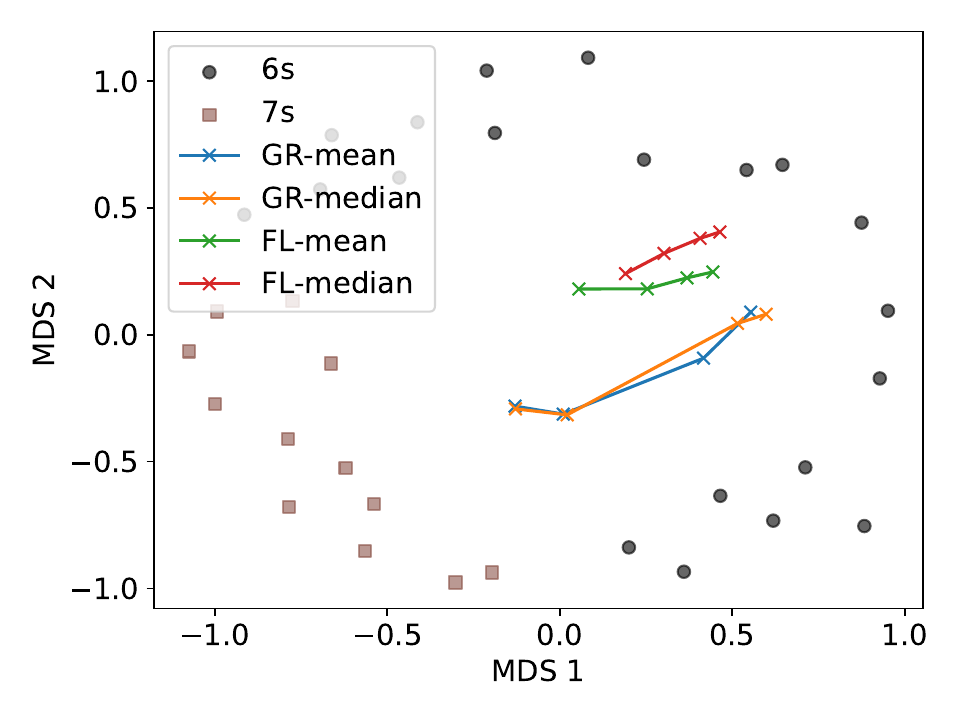}\vspace{-3mm}
    \caption{MDS embedding of MNIST digits and Grassmannian and flag averages. Each ``x'' is an average of $20$ representations of $6$s as we gradually add $i$ outlier representations of $7$s for $i=0,4,8,12$ data sets. The averages move from right to left as we add more $7$s.\vspace{-5mm}}
    \label{fig:mnist_outliers}
\end{figure}

\paragraph{PCA by flag statistics}
We use the Cats and Dogs dataset~\cite{yambor2000analysis} to compute $3$-dimensional PCA~\cite{hotelling1933analysis} weights, $\mathbf{W}^* \in \R^{4096 \times 3}$, of the data matrix, $\mathbf{X} \in \R^{198 \times 4096}$. Then we randomly split the $m$ subjects into $p$ evenly sized groups to generate $p$ data matrices each of size $p_i$: $\{\mathbf{X_i}\}_{i=1}^p \subset \R^{p_i \times 4096}$. PCA weights of each $\mathbf{X_i}$ are computed as $\mathbf{W_i}\in \R^{4096 \times 3}$. $\mathbf{W}^*$ is predicted by averaging $\{\mathbf{W_i}\}_{i=1}^p$ as points on $\flag(1,2,3; 4096)$ and $\Gr(3; 4096)$. Specifically, we compute the flag-mean (ours), Grassmannian-mean,  Euclidean-mean, and a random point. Then we record the chordal distance on $\flag(1,2,3; 4096)$ (reconstruction error) between the average and $[\![\mathbf{W}^*]\!] \in \flag(1,2,3; 4096)$. Our flag-mean is closer to $[\![\mathbf{W}^*]\!]$ for $p=1,2,\dots,6$.


\subsection{Averaging Rigid Motions}
We now evaluate our algorithm in robust averaging of a set of points represented on the $SE(3)$-manifold. To this end, we synthesize a dataset of 400 rigid motions (rotations and translations) around multiple randomly drawn central points in $SE(3)$. 
These points are generated with increasing noise levels. Particularly, for rotations we perturb the rotation axis using variances of $[0, 5, 10, 15, 20, 25]$ degrees,
    while the translations are perturbed in the levels of $[0, 0.02, 0.05, 0.1, 0.2, 0.3]$. For each noise level, we run 50 experiments and use $\lambda=1$ to ensure that translations and rotations are well balanced.
We then run our algorithms for the flag-mean and -median. These algorithms are compared to standard Govindu~\cite{govindu2004lie}, and baseline (QT) where translations and quaternions are averaged independently using Markley's method~\cite{markley2007averaging}. We also ran dual quaternion averaging of Torsello~\etal~\cite{torsello2011multiview} and found it produced identical results to Govindu.
Our results in Fig.~\ref{fig:se3avg} show that both of our algorithms surpass classical motion averages with our flag-median producing more robust estimates. 
\begin{figure}[t]
    \includegraphics[width=\linewidth]{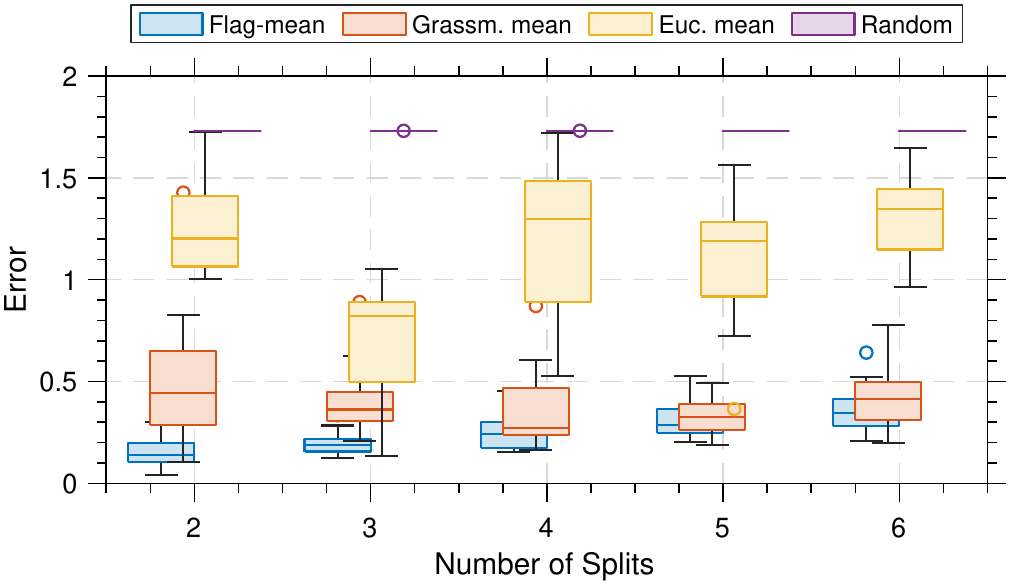}
    \vspace{-4mm}\caption{Reconstruction error for PCA weights as a function of Number of Splits, $p$.\vspace{-2mm}}
    \label{fig:pca_exp}
\end{figure}
\begin{figure}[t]
        \includegraphics[width=\columnwidth]{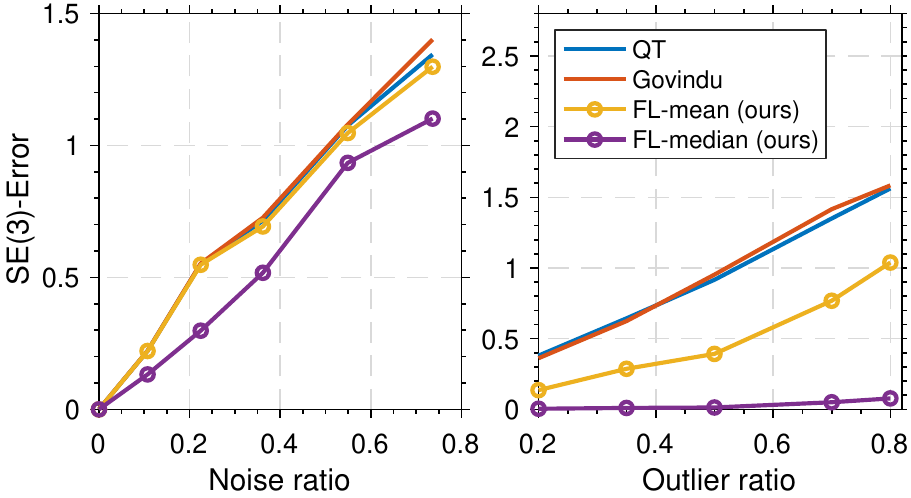}
	\vspace{-4mm}\caption{Single motion averaging experiments for increasing levels of $SE(3)$-noise and outlier ratios.\vspace{-4mm}}
	\label{fig:se3avg}
\end{figure}
\vspace{-2mm}\section{Conclusion}\vspace{-1mm}
We have provided two algorithms, the \emph{flag-mean} \& \emph{flag-median}, that estimate flag-prototypes of points defined on flag manifolds using chordal distance. We have established the convergence of our IRLS algorithm yielding the flag-median. Our methodologies deviate from the existing literature~\cite{draper2014flag, mankovich2022flag} which average Grassmannians into flags, and are found to be useful when either inherent outlier-robustness is necessary or when the subspaces possess a natural order, (e.g., hierarchical data). Since flag manifolds generalize Grassmannians, our methods can average on a broader class of manifolds. Consequently, we have applied our averages to rigid motions via group contraction.

\paragraph{Limitations \& future work} Our method can become computationally expensive when applied to high-dimensional problems. Moreover, our convergence results are weaker than desired as we have not provided a convergence rate. Besides addressing these, our future work involves clustering and inference on data with hierarchical structures.

\paragraph{Acknowledgements}
Benjamin Busam introduced Nathan and Tolga during CVPR 2022 in New Orleans. This gracious act is the catalyst in the realization of this work.

{\small

\begin{thebibliography}{10}\itemsep=-1pt

\bibitem{ablamowicz2004lectures}
Rafal Ablamowicz, Garret Sobczyk, et~al.
\newblock {\em Lectures on {C}lifford (geometric) algebras and applications}.
\newblock Springer, 2004.

\bibitem{absil2007trust}
P-A Absil, Christopher~G Baker, and Kyle~A Gallivan.
\newblock Trust-region methods on {R}iemannian manifolds.
\newblock {\em Foundations of Computational Mathematics}, 7(3):303--330, 2007.

\bibitem{afsari2011riemannian}
Bijan Afsari.
\newblock Riemannian $\mathcal{L}^p$ center of mass: existence, uniqueness, and
  convexity.
\newblock {\em Proceedings of the American Mathematical Society},
  139(2):655--673, 2011.

\bibitem{aftab2015convergence}
Khurrum Aftab and Richard Hartley.
\newblock Convergence of iteratively re-weighted least squares to robust
  m-estimators.
\newblock In {\em 2015 IEEE Winter Conference on Applications of Computer
  Vision}, pages 480--487. IEEE, 2015.

\bibitem{aftab2014generalized}
Khurrum Aftab, Richard Hartley, and Jochen Trumpf.
\newblock Generalized weiszfeld algorithms for lq optimization.
\newblock {\em IEEE transactions on pattern analysis and machine intelligence},
  37(4):728--745, 2014.

\bibitem{alberti2005geometric}
Giovanni Alberti.
\newblock Geometric measure theory.
\newblock {\em Encyclopedia of mathematical Physics}, 2:520--527, 2005.

\bibitem{alekseevsky1997flag}
DV Alekseevsky.
\newblock Flag manifolds.
\newblock {\em Sbornik Radova}, 11:3--35, 1997.

\bibitem{arrigoni2016spectral}
Federica Arrigoni, Beatrice Rossi, and Andrea Fusiello.
\newblock Spectral synchronization of multiple views in {SE}(3).
\newblock {\em SIAM Journal on Imaging Sciences}, 9(4):1963--1990, 2016.

\bibitem{beck2015weiszfeld}
Amir Beck and Shoham Sabach.
\newblock Weiszfeld’s method: Old and new results.
\newblock {\em Journal of Optimization Theory and Applications}, 164:1--40,
  2015.

\bibitem{belhumeur1997eigenfaces}
Peter~N. Belhumeur, Joao~P Hespanha, and David~J. Kriegman.
\newblock Eigenfaces vs. fisherfaces: recognition using class specific linear
  projection.
\newblock {\em IEEE Transactions on pattern analysis and machine intelligence},
  19(7):711--720, 1997.

\bibitem{beveridge2008principal}
J~Ross Beveridge, Bruce~A Draper, Jen-Mei Chang, Michael Kirby, Holger Kley,
  and Chris Peterson.
\newblock Principal angles separate subject illumination spaces in ydb and
  cmu-pie.
\newblock {\em IEEE transactions on pattern analysis and machine intelligence},
  31(2):351--363, 2008.

\bibitem{birdal2020synchronizing}
Tolga Birdal, Michael Arbel, Umut Simsekli, and Leonidas~J Guibas.
\newblock Synchronizing probability measures on rotations via optimal
  transport.
\newblock In {\em IEEE/CVF Conference on Computer Vision and Pattern
  Recognition}, 2020.

\bibitem{birdal2019probabilistic}
Tolga Birdal and Umut Simsekli.
\newblock Probabilistic permutation synchronization using the riemannian
  structure of the birkhoff polytope.
\newblock In {\em Proceedings of the IEEE/CVF Conference on Computer Vision and
  Pattern Recognition}, pages 11105--11116, 2019.

\bibitem{birdal2018bayesian}
Tolga Birdal, Umut Simsekli, Mustafa~Onur Eken, and Slobodan Ilic.
\newblock Bayesian pose graph optimization via {B}ingham distributions and
  tempered geodesic {MCMC}.
\newblock {\em Advances in Neural Information Processing Systems}, 31, 2018.

\bibitem{boumal2014manopt}
Nicolas Boumal, Bamdev Mishra, P-A Absil, and Rodolphe Sepulchre.
\newblock Manopt, a {M}atlab toolbox for optimization on manifolds.
\newblock {\em The Journal of Machine Learning Research}, 15(1):1455--1459,
  2014.

\bibitem{opencv_library}
G. Bradski.
\newblock {The OpenCV Library}.
\newblock {\em Dr. Dobb's Journal of Software Tools}, 2000.

\bibitem{bregier2018defining}
Romain Br{\'e}gier, Fr{\'e}d{\'e}ric Devernay, Laetitia Leyrit, and James~L
  Crowley.
\newblock Defining the pose of any 3d rigid object and an associated distance.
\newblock {\em International Journal of Computer Vision}, 126(6):571--596,
  2018.

\bibitem{busam2016_iccvw}
Benjamin Busam, Tolga Birdal, and Nassir Navab.
\newblock Camera pose filtering with local regression geodesics on the
  {R}iemannian manifold of dual quaternions.
\newblock In {\em IEEE International Conference on Computer Vision Workshop
  (ICCVW)}, October 2017.

\bibitem{chakraborty2017intrinsic}
Rudrasis Chakraborty, Soren Hauberg, and Baba~C Vemuri.
\newblock Intrinsic grassmann averages for online linear and robust subspace
  learning.
\newblock In {\em Proceedings of the IEEE Conference on Computer Vision and
  Pattern Recognition}, pages 6196--6204, 2017.

\bibitem{chakraborty2015recursive}
Rudrasis Chakraborty and Baba~C Vemuri.
\newblock Recursive frechet mean computation on the grassmannian and its
  applications to computer vision.
\newblock In {\em Proceedings of the IEEE International Conference on Computer
  Vision}, pages 4229--4237, 2015.

\bibitem{chatterjee2017robust}
Avishek Chatterjee and Venu~Madhav Govindu.
\newblock Robust relative rotation averaging.
\newblock {\em IEEE transactions on pattern analysis and machine intelligence},
  40(4):958--972, 2017.

\bibitem{dellaert2020shonan}
Frank Dellaert, David~M Rosen, Jing Wu, Robert Mahony, and Luca Carlone.
\newblock Shonan rotation averaging: global optimality by surfing {$SO
  (p)^{n}$}.
\newblock In {\em European Conference on Computer Vision}, pages 292--308.
  Springer, 2020.

\bibitem{deng2012mnist}
Li Deng.
\newblock The {MNIST} database of handwritten digit images for machine learning
  research.
\newblock {\em IEEE signal processing magazine}, 29(6):141--142, 2012.

\bibitem{donagi2008glsms}
Ron Donagi and Eric Sharpe.
\newblock Glsms for partial flag manifolds.
\newblock {\em Journal of Geometry and Physics}, 58(12):1662--1692, 2008.

\bibitem{draper2014flag}
Bruce Draper, Michael Kirby, Justin Marks, Tim Marrinan, and Chris Peterson.
\newblock A flag representation for finite collections of subspaces of mixed
  dimensions.
\newblock {\em Linear Algebra and its Applications}, 451:15--32, 2014.

\bibitem{eriksson2019rotation}
Anders Eriksson, Carl Olsson, Fredrik Kahl, and Tat-Jun Chin.
\newblock Rotation averaging with the chordal distance: Global minimizers and
  strong duality.
\newblock {\em IEEE Transactions of Pattern Analysis \& Machine Intelligence
  (T-PAMI)}, 2019.

\bibitem{fletcher2007riemannian}
P~Thomas Fletcher and Sarang Joshi.
\newblock Riemannian geometry for the statistical analysis of diffusion tensor
  data.
\newblock {\em Signal Processing}, 87(2):250--262, 2007.

\bibitem{govindu2001combining}
Venu~Madhav Govindu.
\newblock Combining two-view constraints for motion estimation.
\newblock In {\em Proceedings of the IEEE Computer Society Conference on
  Computer Vision and Pattern Recognition (CVPR).}, volume~2, pages II--II.
  IEEE, 2001.

\bibitem{govindu2004lie}
Venu~Madhav Govindu.
\newblock Lie-algebraic averaging for globally consistent motion estimation.
\newblock In {\em Proceedings of the IEEE Computer Society Conference on
  Computer Vision and Pattern Recognition (CVPR), 2004.}, volume~1, pages I--I.
  IEEE, 2004.

\bibitem{harandi2011graph}
Mehrtash~T Harandi, Conrad Sanderson, Sareh Shirazi, and Brian~C Lovell.
\newblock Graph embedding discriminant analysis on grassmannian manifolds for
  improved image set matching.
\newblock In {\em CVPR 2011}, pages 2705--2712. IEEE, 2011.

\bibitem{hartley2011l1}
Richard Hartley, Khurrum Aftab, and Jochen Trumpf.
\newblock L1 rotation averaging using the weiszfeld algorithm.
\newblock In {\em Proceedings of the IEEE/CVF Conference on Computer Vision and
  Pattern Recognition}, 2011.

\bibitem{hartley2013rotation}
Richard Hartley, Jochen Trumpf, Yuchao Dai, and Hongdong Li.
\newblock Rotation averaging.
\newblock {\em International journal of computer vision}, 103(3):267--305,
  2013.

\bibitem{he2012incremental}
Jun He, Laura Balzano, and Arthur Szlam.
\newblock Incremental gradient on the grassmannian for online foreground and
  background separation in subsampled video.
\newblock In {\em 2012 IEEE Conference on Computer Vision and Pattern
  Recognition}, pages 1568--1575. IEEE, 2012.

\bibitem{hong2014geodesic}
Yi Hong, Roland Kwitt, Nikhil Singh, Brad Davis, Nuno Vasconcelos, and Marc
  Niethammer.
\newblock Geodesic regression on the grassmannian.
\newblock In {\em Computer Vision--ECCV 2014: 13th European Conference, Zurich,
  Switzerland, September 6-12, 2014, Proceedings, Part II 13}, pages 632--646.
  Springer, 2014.

\bibitem{hotelling1933analysis}
Harold Hotelling.
\newblock Analysis of a complex of statistical variables into principal
  components.
\newblock {\em Journal of educational psychology}, 24(6):417, 1933.

\bibitem{huang2019learning}
Xiangru Huang, Zhenxiao Liang, Xiaowei Zhou, Yao Xie, Leonidas~J Guibas, and
  Qixing Huang.
\newblock Learning transformation synchronization.
\newblock In {\em Proceedings of the IEEE/CVF Conference on Computer Vision and
  Pattern Recognition}, pages 8082--8091, 2019.

\bibitem{kirby2001geometric}
Michael Kirby.
\newblock {\em Geometric data analysis: an empirical approach to dimensionality
  reduction and the study of patterns}, volume~31.
\newblock Wiley New York, 2001.

\bibitem{kruskal1978multidimensional}
Joseph~B Kruskal and Myron Wish.
\newblock {\em Multidimensional scaling}, volume~11.
\newblock Sage, 1978.

\bibitem{kumar2016robust}
Sriram Kumar and Andreas Savakis.
\newblock Robust domain adaptation on the l1-grassmannian manifold.
\newblock In {\em Proceedings of the IEEE Conference on Computer Vision and
  Pattern Recognition Workshops}, pages 103--110, 2016.

\bibitem{lee2020robust}
Seong~Hun Lee and Javier Civera.
\newblock Robust single rotation averaging.
\newblock {\em arXiv preprint arXiv:2004.00732}, 2020.

\bibitem{li2020hybrid}
Xinyi Li and Haibin Ling.
\newblock Hybrid camera pose estimation with online partitioning for slam.
\newblock {\em IEEE Robotics and Automation Letters}, 5(2):1453--1460, 2020.

\bibitem{liu2009recognizing}
Jingen Liu, Jiebo Luo, and Mubarak Shah.
\newblock Recognizing realistic actions from videos ``in the wild''.
\newblock In {\em 2009 IEEE Conference on Computer Vision and Pattern
  Recognition}, pages 1996--2003. IEEE, 2009.

\bibitem{ma2021flag}
Xiaofeng Ma, Michael Kirby, and Chris Peterson.
\newblock The flag manifold as a tool for analyzing and comparing sets of data
  sets.
\newblock In {\em Proceedings of the IEEE/CVF International Conference on
  Computer Vision}, pages 4185--4194, 2021.

\bibitem{ma2022self}
Xiaofeng Ma, Michael Kirby, and Chris Peterson.
\newblock Self-organizing mappings on the flag manifold with applications to
  hyper-spectral image data analysis.
\newblock {\em Neural Computing and Applications}, 34(1):39--49, 2022.

\bibitem{mankovich2022flag}
Nathan Mankovich, Emily~J King, Chris Peterson, and Michael Kirby.
\newblock The flag median and {F}lag{IRLS}.
\newblock In {\em Proceedings of the IEEE/CVF Conference on Computer Vision and
  Pattern Recognition}, pages 10339--10347, 2022.

\bibitem{markley2007averaging}
F~Landis Markley, Yang Cheng, John~L Crassidis, and Yaakov Oshman.
\newblock Averaging quaternions.
\newblock {\em Journal of Guidance, Control, and Dynamics}, 30(4):1193--1197,
  2007.

\bibitem{marrinan2016flag}
Timothy Marrinan, J~Ross Beveridge, Bruce Draper, Michael Kirby, and Chris
  Peterson.
\newblock Flag-based detection of weak gas signatures in long-wave infrared
  hyperspectral image sequences.
\newblock In {\em Algorithms and Technologies for Multispectral, Hyperspectral,
  and Ultraspectral Imagery XXII}, volume 9840, pages 407--416. SPIE, 2016.

\bibitem{marrinan2014finding}
Tim Marrinan, J Ross~Beveridge, Bruce Draper, Michael Kirby, and Chris
  Peterson.
\newblock Finding the subspace mean or median to fit your need.
\newblock In {\em Proceedings of the IEEE Conference on Computer Vision and
  Pattern Recognition}, pages 1082--1089, 2014.

\bibitem{minnehan2019deep}
Breton~Lawrence Minnehan.
\newblock {\em Deep Grassmann Manifold Optimization for Computer Vision}.
\newblock Rochester Institute of Technology, 2019.

\bibitem{nawratil2016fundamentals}
Georg Nawratil.
\newblock Fundamentals of quaternionic kinematics in {E}uclidean 4-space.
\newblock {\em Advances in Applied Clifford Algebras}, 26(2):693--717, 2016.

\bibitem{nguyen2022closed}
Du Nguyen.
\newblock Closed-form geodesics and optimization for {R}iemannian logarithms of
  {S}tiefel and flag manifolds.
\newblock {\em Journal of Optimization Theory and Applications}, pages 1--25,
  2022.

\bibitem{nishimori2007flag}
Yasunori Nishimori, Shotaro Akaho, Samer Abdallah, and Mark~D Plumbley.
\newblock Flag manifolds for subspace {ICA} problems.
\newblock In {\em 2007 IEEE International Conference on Acoustics, Speech and
  Signal Processing-ICASSP'07}, volume~4, pages IV--1417. IEEE, 2007.

\bibitem{nishimori2006riemannian0}
Yasunori Nishimori, Shotaro Akaho, and Mark~D Plumbley.
\newblock Riemannian optimization method on generalized flag manifolds for
  complex and subspace {ICA}.
\newblock In {\em AIP Conference Proceedings}, volume 872, pages 89--96.
  American Institute of Physics, 2006.

\bibitem{nishimori2006riemannian}
Yasunori Nishimori, Shotaro Akaho, and Mark~D Plumbley.
\newblock Riemannian optimization method on generalized flag manifolds for
  complex and subspace ica.
\newblock In {\em AIP Conference Proceedings}, volume 872, pages 89--96.
  American Institute of Physics, 2006.

\bibitem{nishimori2006riemannian1}
Yasunori Nishimori, Shotaro Akaho, and Mark~D Plumbley.
\newblock Riemannian optimization method on the flag manifold for independent
  subspace analysis.
\newblock In {\em International conference on independent component analysis
  and signal separation}, pages 295--302. Springer, 2006.

\bibitem{nishimori2008natural}
Yasunori Nishimori, Shotaro Akaho, and Mark~D Plumbley.
\newblock Natural conjugate gradient on complex flag manifolds for complex
  independent subspace analysis.
\newblock In {\em International Conference on Artificial Neural Networks},
  pages 165--174. Springer, 2008.

\bibitem{ozyesil2018synchronization}
Onur Ozyesil, Nir Sharon, and Amit Singer.
\newblock Synchronization over {C}artan motion groups via contraction.
\newblock {\em SIAM Journal on Applied Algebra and Geometry}, 2(2):207--241,
  2018.

\bibitem{pitaval2013flag}
Renaud-Alexandre Pitaval and Olav Tirkkonen.
\newblock Flag orbit codes and their expansion to {S}tiefel codes.
\newblock In {\em 2013 IEEE Information Theory Workshop (ITW)}, pages 1--5.
  IEEE, 2013.

\bibitem{selig2005study}
JM Selig.
\newblock The study quadric.
\newblock {\em Geometric Fundamentals of Robotics}, pages 241--269, 2005.

\bibitem{torsello2011multiview}
Andrea Torsello, Emanuele Rodola, and Andrea Albarelli.
\newblock Multiview registration via graph diffusion of dual quaternions.
\newblock In {\em CVPR 2011}, pages 2441--2448. IEEE, 2011.

\bibitem{wiggerman1998fundamental}
Mark Wiggerman.
\newblock The fundamental group of a real flag manifold.
\newblock {\em Indagationes Mathematicae}, 9(1):141--153, 1998.

\bibitem{yambor2000analysis}
Wendy~S Yambor.
\newblock Analysis of {PCA}-based and {F}isher discriminant-based image
  recognition algorithms.
\newblock Master's thesis, Citeseer, 2000.

\bibitem{ye2022optimization}
Ke Ye, Ken Sze-Wai Wong, and Lek-Heng Lim.
\newblock Optimization on flag manifolds.
\newblock {\em Mathematical Programming}, 194(1):621--660, 2022.

\bibitem{zhang2018grassmannian}
Jiayao Zhang, Guangxu Zhu, Robert~W Heath~Jr, and Kaibin Huang.
\newblock Grassmannian learning: Embedding geometry awareness in shallow and
  deep learning.
\newblock {\em arXiv preprint arXiv:1808.02229}, 2018.

\bibitem{zhao2020quaternion}
Yongheng Zhao, Tolga Birdal, Jan~Eric Lenssen, Emanuele Menegatti, Leonidas
  Guibas, and Federico Tombari.
\newblock Quaternion equivariant capsule networks for 3d point clouds.
\newblock In {\em Computer Vision--ECCV 2020: 16th European Conference,
  Glasgow, UK, August 23--28, 2020, Proceedings, Part I 16}, pages 1--19.
  Springer, 2020.

\end{thebibliography}

}

\vspace{5mm}
\appendix
\section*{Appendices}

\section{Flag Representations}
A flag is a nested collection of subspaces of increasing dimension. An illustration of a $\flag(1,2;3)$ is in Fig.~\ref{fig:flag}).

Flags are a natural representation for time series data as nested ``time subspaces.'' Suppose we data at three times: $\x_{t=1},\x_{t=2},\x_{t=3} \in \R^d$. We can group these data based on their ``effect over time'' in the sense that time $t=1$ stands alone, $t=1$ affects $t=2$, and $t=1$ and $t=2$ affect $t=3$. This grouping gives us the flag of type $\flag(1,2,3:d)$:\vspace{-2.5mm}
\begin{equation}
    \text{span}\{\x_1\}  \subset \text{span}\{\x_1,  \x_2 \} \subset \{ \x_1, \x_2, \x_3\} \subset \R^d.\vspace{-2.5mm}
\end{equation}
Flags can also model some hierarchical data using hierarchically nested subspaces. For a nice list of flags in mathematics, see~\cite{ye2022optimization}.

Recall $\flag(d+1) = \flag(d_1,d_2,\dots, d_k;d_{k+1}=d)$, we take $m_1=1$ and $m_j = d_j - d_{j-1}$. There are number of representations for flag manifolds involving quotients ~\cite{ye2022optimization}. We mention the most popular representation in the manuscript. A number works~\cite{ma2021flag, ma2022self, ye2022optimization} use
\begin{equation}\label{eq:sorep}
    \frac{SO(d)}{S(O(m_1) \times O(m_2) \times \cdots \times O(m_{k+1}))}
\end{equation}
where $S(O(m_1)\times \dots \times O(m_k))$ is\vspace{-2.15mm}
\begin{equation}
    \{ (\mathbf{M}_1, \dots, \mathbf{M}_k) \: : \: \prod_{i=1}^k \det(\mathbf{M}_i) = 1\}.\vspace{-1.8mm}
\end{equation}
Other works~\cite{pitaval2013flag, nguyen2022closed} represent flag manifolds using the quotient
\begin{equation}\label{eq:strep}
    \frac{St(d_k, d)}{O(m_1) \times O(m_2) \times \cdots \times O(m_{k})}.
\end{equation}
In this representation, $\X \in St(d_k, d)$ is used to represent the equivalence class
\begin{equation*}
[\![\X]\!] = \left\{ \X \mathbf{O} \: : \: \mathbf{O}_i \in O(m_i) \right\} \in \flag(d+1)
\end{equation*}
where  $\mathbf{O} = \text{diag}(\mathbf{O}_{m_1}, \dots, \mathbf{O}_{m_{k}})$. We use this Stiefel quotient representation in this manuscript.

\begin{figure}[t]
        \includegraphics[width=\columnwidth]{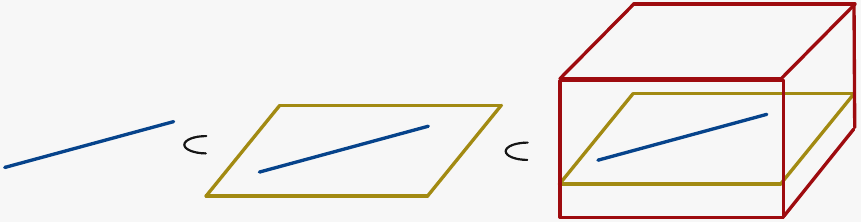}
	\caption{Illustration of a nested sequence of subspaces corresponding to
a point on the flag manifold. \vspace{-2.5mm}}
	\label{fig:flag}
\end{figure}
Ye~\etal prove that $\flag(d+1)$ is diffeomorphic to Eqs.~\ref{eq:sorep} and \ref{eq:strep} (see Prop. 4 and 12 in~\cite{ye2022optimization}). Additionally, Ye~\etal prove that flags are a closed submanifold of
\begin{equation}
    \Gr(m_1, d) \times \Gr(m_2, d) \times \cdots \Gr(m_k, d).
\end{equation}
Our chordal distance on flag manifolds leverages this product-of-Grassmannians since it is the $2$-norm of the chordal distances between each $\Gr(m_1, d)$.

\section{Proof of Proposition I}
Before providing the full proof, let us recall Prop. I:
\begin{prop}\label{prop: flag mean app}
The chordal flag-mean of $\left \{ [\![\X_i ]\!] \right \}_{i=1}^p \subset \flag(d+1)$ is
\begin{equation}\label{eq:suppflagmeanopt} 
    \left[\![\bm{\mu}\right]\!] := \argmin_{[\![\Y]\!]\in \flag(d+1)} \sum_{i=1}^p \alpha_i d_c([\![\X^{(i)}]\!], [\![\Y]\!])^2
\end{equation}
and can be phrased into a Stiefel manifold optimization problem as
\begin{equation}\label{eq:suppstiefelopt}
    \left[\bm{\mu}\right] = \argmin_{\Y \in St(d_k,d)} \sum_{j=1}^k m_j -  \tr \left( \I_j \Y^{\top} \mathbf{P}_j  \Y \right)
\end{equation}
where the matrices $\I_j$ and $\mathbf{P_j}$ are given in Eq. \ref{eq: Ij2} and Eq. \ref{eq: Pj2} respectively.
\begin{equation}\label{eq: Ij2}
    (\I_j)_{i,l} = 
    \begin{cases}
        1, & i = l \in \{ d_{j-1} + 1, 
 d_{j-1} + 2, \dots, d_j\} \\
        0, &\text{ otherwise} \\
    \end{cases}
\end{equation}
and define:
\begin{equation}\label{eq: Pj2}
    \mathbf{P}_j =  \sum_{i=1}^p \alpha_j \X_j^{(i)} {\X_j^{(i)}}^{\top}
\end{equation}
\end{prop}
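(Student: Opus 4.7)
The plan is to expand the squared chordal distance using \cref{def:flagdist}, swap the order of summation, rewrite each trace so that the full matrix $\Y$ appears rather than its block $\Y_j$, and finally collect the data-dependent terms into the projection operators $\mathbf{P}_j$. Before doing so, I would first observe that the quantity $\tr(\X_j^T\Y_j\Y_j^T\X_j)$ depends only on the projectors $\X_j\X_j^T$ and $\Y_j\Y_j^T$, and therefore only on the flag equivalence classes $[[\X]]$ and $[[\Y]]$. This guarantees that the flag-manifold objective in \cref{eq:suppflagmeanopt} is well-defined on the quotient and that one may equivalently minimize the same functional over Stiefel representatives $\Y \in St(d_k,d)$, converting the quotient problem into an optimization on a smooth, compact matrix manifold.

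For the main computation, substituting \cref{eq: chordal distance} into \cref{eq:suppflagmeanopt} and swapping summation orders yields
\begin{equation*}
\sum_{i=1}^p \alpha_i d_c([[\X^{(i)}]],[[\Y]])^2 = C - \sum_{j=1}^k \sum_{i=1}^p \alpha_i \tr\bigl(\X_j^{(i)T}\Y_j\Y_j^T\X_j^{(i)}\bigr),
\end{equation*}
where $C$ is a constant in $\Y$ (equal to $\sum_j m_j$ after absorbing $\sum_i \alpha_i$ into the weights). Applying the cyclic property of the trace gives $\tr(\X_j^T\Y_j\Y_j^T\X_j)=\tr(\Y_j^T\X_j\X_j^T\Y_j)$. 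Writing $\Y_j = \Y S_j$ for the column-selection matrix $S_j \in \R^{d_k \times m_j}$ that extracts columns $d_{j-1}+1,\ldots,d_j$ of $\Y$, and observing that $S_j S_j^T$ is precisely the diagonal indicator $\I_j$ from \cref{eq: Ij2}, another cyclic rearrangement produces
\begin{equation*}
\tr\bigl(\Y_j^T \X_j^{(i)}\X_j^{(i)T}\Y_j\bigr) = \tr\bigl(\I_j\, \Y^T \X_j^{(i)}\X_j^{(i)T}\, \Y\bigr).
\end{equation*}
Summing over $i$ and using linearity of the trace then pulls the weighted outer products inside, forming $\mathbf{P}_j := \sum_i \alpha_i \X_j^{(i)}\X_j^{(i)T}$, so that dropping the $\Y$-independent constant $C$ delivers \cref{eq:suppstiefelopt} exactly.

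The main obstacle is the preliminary lift: verifying that one may pass from the flag manifold to a Stiefel representative without losing anything, i.e.\ that the $\argmin$ on $\flag(d+1)$ coincides (up to the quotient) with the $\argmin$ on $St(d_k,d)$. This reduces to checking invariance of each $\tr(\X_j^T\Y_j\Y_j^T\X_j)$ under the right action of the stabilizer $S(O(m_1)\times\cdots\times O(m_k))$, which holds because $\Y_j\Y_j^T$ and $\X_j\X_j^T$ are manifestly invariant under right multiplication of $\Y_j$ and $\X_j$ by any element of $O(m_j)$. Once this point is nailed down, the remaining steps are a short chain of trace identities and are routine.
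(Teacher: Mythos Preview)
Your proposal is correct and follows essentially the same route as the paper: expand the squared chordal distance, swap sums, use the cyclic trace identity together with $\Y_j\Y_j^T=\Y\I_j\Y^T$ (your column-selector $S_j$ with $S_jS_j^T=\I_j$ is exactly this), and collect the weighted outer products into $\mathbf{P}_j$. Your treatment of the flag-to-Stiefel lift via invariance of the projectors under the $S(O(m_1)\times\cdots\times O(m_k))$ stabilizer is slightly more explicit than the paper's, which simply notes that the flag constraints $\Y_j^T\Y_j=\I$ and $[\Y_i]\cap[\Y_j]=\emptyset$ are equivalent to $\Y^T\Y=\I$; both arguments are correct and complementary.
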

For example, if we are averaging on $\flag(1,3;4)$ we have
\begin{equation*}
\I_1 = 
\begin{bmatrix}
    1 & 0 & 0\\
    0 & 0 & 0\\
    0 & 0 & 0
\end{bmatrix} \mathrm{ \quad and \quad}
\I_2 = 
\begin{bmatrix}
    0 & 0 & 0\\
    0 & 1 & 0\\
    0 & 0 & 1
\end{bmatrix}.
\end{equation*}
\begin{proof}
We begin by realizing Eq. \ref{eq:suppflagmeanopt} as an optimization problem using the definition of chordal distance:
\begin{equation*}
    \argmin_{[\![\Y]\!]\in \flag(d+1)} \sum_{i=1}^p  \alpha_i \left( \sum_{j=1}^k m_j - \tr\left( {\X_j^{(i)}}^{\top} \Y_j \Y_j^{\top} \X_j^{(i)} \right) \right).
\end{equation*}
Then we move our summations around to simplify our objective function:
\begin{align*}
    & \sum_{i=1}^p \alpha_i \left( \sum_{j=1}^k m_j - \tr\left( \Y_j^{\top}\X_j^{(i)} {\X_j^{(i)}}^{\top}\Y_j \right) \right)\\
    &=  \sum_{j=1}^k  \left( \sum_{i=1}^p \alpha_i\right) m_j -  \sum_{i=1}^p \alpha_i \tr\left( \Y_j^{\top}\X_j^{(i)} {\X_j^{(i)}}^{\top}\Y_j \right), \\
    &=  \sum_{j=1}^k  \left( \sum_{i=1}^p \alpha_i\right) m_j -  \sum_{j=1}^k \sum_{i=1}^p \alpha_i \tr\left( \Y_j^{\top}\X_j^{(i)} {\X_j^{(i)}}^{\top}\Y_j \right).
\end{align*}

Since $\sum_{i=1}^p \alpha_i$ is constant with respect to $[\![\Y]\!]$, Eq.~\ref{eq:suppflagmeanopt} is equivalent to 
\begin{equation*}
 \argmin_{[\![\Y]\!]\in \flag(d+1)} \sum_{j=1}^k  m_j -  \sum_{j=1}^k \sum_{i=1}^p \alpha_i \tr\left( \Y_j^{\top}\X_j^{(i)} {\X_j^{(i)}}^{\top}\Y_j \right).
\end{equation*}
Using our definitions for $\I_j$ and $\mathbf{P}_j$, we can write the objective function in terms of $\Y$:
\begin{align*}
    &\sum_{j=1}^k  m_j - \tr \left( \Y_j^{\top}\left( \sum_{i=1}^p\alpha_i\X_j^{(i)} {\X_j^{(i)}}^{\top} \right) \Y_j \right), \\
    &= \sum_{j=1}^k m_j - \tr \left( \Y_j^{\top}\mathbf{P}_j \Y_j \right),\\
    &= \sum_{j=1}^k m_j - \tr \left( \Y_j \Y_j^{\top} \mathbf{P}_j  \right),\\
    &= \sum_{j=1}^k m_j - \tr \left( \Y \I_j \Y^{\top} \mathbf{P}_j  \right).
\end{align*}
The third equality is true because $\Y_j \Y_j^{\top} = \Y \I_j \Y^{\top}$.


There are two constraints for $[\![\Y]\!] \in \flag(d+1)$ according to our representation for points on the flag manifold. The first constraint is $\Y_j^{\top} \Y_j = \I$ for $j=1,2,\dots,p$. The second constraint is $[\Y_j] \cap [\Y_i] = \emptyset$ for all $i \neq j$. These constraints are satisfied when $\Y^{\top}\Y = \I$, e.g. $\Y \in St(d_k,d)$.

Using trace invariance to cyclic permutations, the chordal flag mean optimization problem Eq.~\ref{eq:suppflagmeanopt} is equivalent to the Stiefel optimization problem Eq.~\ref{eq:suppstiefelopt}.
\end{proof}


\section{Proof of Proposition III}
\begin{prop}
The chordal flag-median of $\left \{ [\![\X_i ]\!] \right \}_{i=1}^p \subset \flag(d+1)$,
\begin{equation} \label{eq:suppflagmedian}
    \left[\![\bm{\eta}]\right] = \argmin_{[\![\Y]\!] \in \flag(d+1)} \sum_{i=1}^p \alpha_i d_c([\![\X^{(i)}]\!], [\![\Y]\!]),
\end{equation}
can be phrased with weights 
\begin{equation*}
    w_i([\![\Y]\!]) = \sum_{j=1}^k \frac{\alpha_i}{\max\{d_c([\![\X^{(i)}]\!], [\![\Y]\!]), \epsilon\}}
\end{equation*}
as the optimization problem
\begin{equation*}
 \argmin_{[\![\Y]\!] \in \flag(d+1)}\sum_{i=1}^p \sum_{j=1}^k m_j - w_i([\![\Y]\!]) \tr\left( \Y_j^{\top} \X_j^{(i)}{\X_j^{(i)}}^{\top} \Y_j \right)
\end{equation*}
with $\epsilon = 0$ as long as $d_c([\![\X^{(i)}]\!], [\![\Y]\!]) \neq 0$ for all $i$.
\end{prop}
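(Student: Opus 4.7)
The plan is to prove this by a direct algebraic reformulation, paralleling the derivation used in Proposition I but with an extra $1/d_c$ factor that folds into the weights. First I would assume that $d_c([[\X^{(i)}]],[[\Y]]) > 0$ for every $i$ so that $\epsilon = 0$ is admissible and $w_i([[\Y]]) = \alpha_i / d_c([[\X^{(i)}]],[[\Y]])$ is well-defined. Under this assumption the trivial identity $a = a^2/a$ for $a > 0$ gives, term by term,
\begin{equation*}
\alpha_i\, d_c([[\X^{(i)}]],[[\Y]]) = w_i([[\Y]])\, d_c([[\X^{(i)}]],[[\Y]])^2,
\end{equation*}
so the original flag-median objective equals $\sum_i w_i([[\Y]])\, d_c([[\X^{(i)}]],[[\Y]])^2$ pointwise on the feasible set.

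Next I would substitute Definition \ref{def:flagdist} for $d_c^2$ and use cyclicity of trace, exactly as in the proof of Proposition \ref{prop: flag mean app}, to rewrite
\begin{equation*}
d_c([[\X^{(i)}]],[[\Y]])^2 = \sum_{j=1}^{k} m_j - \tr\!\left(\Y_j^T \X_j^{(i)} {\X_j^{(i)}}^T \Y_j\right).
\end{equation*}
Plugging this into the previous display and interchanging the order of summation yields the stated reformulation in \cref{eq: median equiv}, with the understanding that $w_i([[\Y]])$ multiplies the $j$-summand. The $\max\{\cdot,\epsilon\}$ in the definition of $w_i$ is only needed to regularize the degenerate case $d_c = 0$, so the assumption $d_c([[\X^{(i)}]],[[\Y]]) \neq 0$ justifies setting $\epsilon = 0$.

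Finally I would note that the flag-manifold constraint on $[[\Y]]$ is unchanged by the reformulation, so the two optimization problems have the same feasible set and (pointwise) the same objective value. Hence they share the same minimizers. As a consistency check, one can also verify that both problems produce the same first-order KKT system: applying the Lagrange-multiplier argument from the sketched proof of Proposition \ref{prop: flag median} to the reformulated objective, treating $w_i([[\Y]])$ as the instantaneous value of the weights, recovers the same stationarity condition $m_j \lambda_{j,j} = \sum_i \alpha_i \tr(\Y_j^T \X_j^{(i)} {\X_j^{(i)}}^T \Y_j)/d_c$.

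The main subtlety — and what I would flag as the principal obstacle — is that $w_i$ depends on the optimization variable $[[\Y]]$, so the reformulated problem is not a quadratic program: it is a self-consistent fixed-point characterization. This is precisely what motivates the IRLS scheme in Proposition \ref{prop: irls iteration}, where freezing $w_i([[\Z]])$ at the current iterate reduces each subproblem to a genuine weighted chordal flag-mean. I would therefore make it explicit that the equivalence established here is of \emph{argmin sets}, not of optimization structure, and that the practical use of the reformulation is through the IRLS iteration rather than a one-shot solve.
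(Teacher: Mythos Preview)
Your direct algebraic route does \emph{not} reach the stated objective. From $\alpha_i d_c = w_i d_c^2$ and the expansion of $d_c^2$ you correctly obtain
\[
\sum_{i=1}^p \alpha_i d_c([[\X^{(i)}]],[[\Y]]) \;=\; \sum_{i=1}^p\sum_{j=1}^k w_i([[\Y]])\bigl(m_j - \tr(\Y_j^T \X_j^{(i)}{\X_j^{(i)}}^T \Y_j)\bigr),
\]
i.e.\ the weight $w_i$ multiplies the \emph{entire} summand, including $m_j$. The proposition's objective, however, is $\sum_{i,j}\bigl(m_j - w_i\tr(\cdot)\bigr)$, with $m_j$ \emph{un}weighted. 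The discrepancy between the two is $\sum_j m_j\cdot\sum_i(w_i([[\Y]])-1)=d_k\bigl(\sum_i \alpha_i/d_c - p\bigr)$, which depends on $[[\Y]]$ through the distances and is therefore not a constant you can drop. So your claim of pointwise equality of objectives, and hence of equal minimizers, fails at exactly this step.

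The paper takes a different route: it writes the Lagrangian of the median problem, differentiates, and shows that the stationarity equations read $\sum_i \bigl(\alpha_i/d_c\bigr)\X_j^{(i)}{\X_j^{(i)}}^T\Y_j = (\text{constraint terms})$ --- precisely the first-order conditions of a weighted flag-mean with weights $w_i=\alpha_i/d_c$ evaluated at the critical point. That is the sense in which the median ``can be phrased'' as the stated problem: it is a fixed-point/KKT equivalence that justifies IRLS, not a pointwise identity of objective values. What you relegated to a ``consistency check'' is in fact the heart of the argument, and your main algebraic line should be dropped or amended to carry the $w_i m_j$ term honestly.
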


\begin{proof}
We can write Eq.~\ref{eq:suppflagmedian} using the definition of chordal distance as
\begin{equation*}
     \argmin_{[\![\Y]\!] \in \flag(d+1)} \sum_{i=1}^p \alpha_i \sqrt{\sum_{j=1}^k m_j - \tr\left( {\X_j^{(i)}}^{\top} \Y_j \Y_j^{\top} \X_j^{(i)} \right) }.
\end{equation*}

The orthogonality constraints for $\Y \in \R^{d \times d_k}$ to represent a point on $\flag(d+1)$ are: (i) $[\Y_j] \cap [\Y_i]  = \emptyset$ for all $i \neq j$ and $\Y_j^{\top} \Y_j = \I$ for all $j$. Let $\theta([\Y_i],[\Y_j])$ denote the vector of principal angles between $[\Y_i]$ and $[\Y_j]$. Using $\tr(\Y_i^{\top}\Y_j\Y_j^{\top} \Y_i) = \|\cos \theta([\Y_i],[\Y_j])\|_2^2$, we encode our orthogonality constraints as
\begin{equation*}
\tr(\Y_i^{\top}\Y_j\Y_j^{\top} \Y_i) = \begin{cases}
                               0 & i \neq j\\
                               d_j & i = j
                              \end{cases}.
\end{equation*}
We will now use 
\begin{equation*}
    \delta_{i,j} = 
    \begin{cases}
        1, & i = j\\
        0, & i \neq j.
    \end{cases}
\end{equation*}
to put these constraints into the Lagrangian. 

Let $\bm{\Lambda}$ be a symmetric matrix of Lagrange multipliers corresponding to the orthogonality constraints. Denote the entry in the $i$th row and $j$th column of $\bm{\Lambda}$ as $\lambda_{i,j}$. With the constraints added to the objective, we define the Lagrangian in Eq.~\ref{eq: lagrangian}.
\begin{align}
\label{eq: lagrangian}
    \mathcal{L}(\Y, \Lambda) &=  \sum_{i=1}^p \alpha_i \sqrt{\sum_{j=1}^k m_j - \tr\left( {\X_j^{(i)}}^{\top} \Y_j \Y_j^{\top} \X_j^{(i)} \right)}\nonumber \\
    &-\sum_{i=j}^k \sum_{j=1}^k \lambda_{i,j} (m_j \delta_{i,j} - \tr(\Y_i^{\top}\Y_j\Y_j^{\top} \Y_i)).
\end{align}

The gradient of Eq.~\ref{eq: lagrangian} w.r.t. ${\Y_j}$ and ${\lambda_{i,j}}$ is
\begin{align*}
\begin{aligned}\label{eq: lagrangian grad}
    \nabla_{\Y_j} \mathcal{L}
    &=  - \sum_{i=1}^p \frac{\alpha_i \X_j^{(i)}{\X_j^{(i)}}^{\top} \Y_j}{\sqrt{\sum_{j=1}^k m_j - \tr \left( {\X_j^{(i)}}^{\top} \Y_j \Y_j^{\top} \X_j^{(i)} \right) } } \\
    &+2\sum_{\substack{i=1 \\ i \neq j}}^k \lambda_{i,j}\Y_i\Y_i^{\top} \Y_j + 4\lambda_{j,j}\Y_j\Y_j^{\top} \Y_j,\\
     \nabla_{\lambda_{i,j}} \mathcal{L} &=  m_j \delta_{i,j} - \tr\left(\Y_i^{\top}\Y_j\Y_j^{\top} \Y_i\right).
\end{aligned}
\end{align*}
Notice we are not dividing by zero because $d_c([\![\X^{(i)}]\!], [\![\Y]\!]) \neq 0$ for all $i$.
Now we use $\nabla_{\Y_j} \mathcal{L} = \bm{0}$ and $\nabla_{\lambda_{i,j}} \mathcal{L} = 0$ to solve for $\lambda_{j,j}$. 

First we will work with $\nabla_{\Y_j} \mathcal{L} = \bm{0}$.
\begin{align*}
    \begin{aligned}
        \mathbf{0} &=  - \sum_{i=1}^p \frac{\alpha_i \X_j^{(i)}{\X_j^{(i)}}^{\top} \Y_j}{d_c([\![\X^{(i)}]\!], [\![\Y]\!])} \\
    &+2\sum_{\substack{i=1 \\ i \neq j}}^k \lambda_{i,j}\Y_i\Y_i^{\top} \Y_j + 4\lambda_{j,j}\Y_j\Y_j^{\top} \Y_j,\\
    &=  - \sum_{i=1}^p \frac{\alpha_i \Y_j^{\top} \X_j^{(i)}{\X_j^{(i)}}^{\top} \Y_j}{d_c([\![\X^{(i)}]\!], [\![\Y]\!])} \\
    &+2\sum_{\substack{i=1 \\ i \neq j}}^k \lambda_{i,j}\Y_j^{\top}\Y_i\Y_i^{\top} \Y_j + 4\lambda_{j,j}\Y_j^{\top}\Y_j\Y_j^{\top} \Y_j,\\
    0 &=  - \sum_{i=1}^p \frac{\alpha_i \tr \left( \Y_j^{\top} \X_j^{(i)}{\X_j^{(i)}}^{\top} \Y_j \right)}{d_c([\![\X^{(i)}]\!], [\![\Y]\!])} \\
    &+2\sum_{\substack{i=1 \\ i \neq j}}^k \lambda_{i,j}\tr \left( \Y_j^{\top}\Y_i\Y_i^{\top} \Y_j\right) + 4\lambda_{j,j}\tr \left(\Y_j^{\top}\Y_j\Y_j^{\top} \Y_j\right).\\
    \end{aligned}
\end{align*}
Using $\nabla_{\lambda_{i,j}} \mathcal{L} = 0$ simplifies our equation to
\begin{align*}
4\lambda_{j,j} \tr (\Y_j^{\top} \Y_j\Y_j^{\top} \Y_j) &=  \sum_{i=1}^p \frac{\alpha_i \tr \left( \Y_j^{\top} \X_j^{(i)}{\X_j^{(i)}}^{\top} \Y_j \right) }{d_c([\![\X^{(i)}]\!], [\![\Y]\!]) }, \\
4m_j \lambda_{j,j} &=  \sum_{i=1}^p \frac{\alpha_i \tr \left( \Y_j^{\top} \X_j^{(i)}{\X_j^{(i)}}^{\top} \Y_j \right) }{d_c([\![\X^{(i)}]\!], [\![\Y]\!]) }. \\
\end{align*}


For $[\![\Y]\!]$ to minimize Eq.~\ref{eq:suppflagmedian}, we would want to maximize $m_j \lambda_{j,j}$ for each $j$. That is to say, we wish to maximize $\sum_{j=1}^k m_j \lambda_{j,j}$:
\begin{equation}\label{eq:flagmedianmax}
\sum_{i=1}^p \sum_{j=1}^k \frac{\alpha_i}{d_c([\![\X^{(i)}]\!], [\![\Y]\!])} \tr\left( \Y_j^{\top} \X_j^{(i)}{\X_j^{(i)}}^{\top} \Y_j \right).
\end{equation}
Maximizing Eq.~\ref{eq:flagmedianmax} is the same as minimizing 
\begin{equation*}
\sum_{i=1}^p \sum_{j=1}^k m_j - \frac{\alpha_i}{ d_c([\![\X^{(i)}]\!], [\![\Y]\!])} \tr\left( \Y_j^{\top} \X_j^{(i)}{\X_j^{(i)}}^{\top} \Y_j \right).
\end{equation*}
Using the definition of $w_i([\![\Y]\!])$, this minimization is
\begin{equation*}
 \argmin_{[\![\Y]\!] \in \flag(d+1)}\sum_{i=1}^p \sum_{j=1}^k m_j - w_i([\![\Y]\!]) \tr\left( \Y_j^{\top} \X_j^{(i)}{\X_j^{(i)}}^{\top} \Y_j \right)
\end{equation*}

\end{proof}

\begin{prop}
 Fix $[\![\Z]\!] \in \flag(d+1)$. Then the minimizer of 
 \begin{equation}\label{eq: median equiv app}
  \sum_{i=1}^p \sum_{j=1}^k\left( m_j - w_i(\Z) \tr\left( \Y_j^{\top} \X_j^{(i)}{\X_j^{(i)}}^{\top} \Y_j \right)\right)
 \end{equation}
 over $[\![\Y]\!] \in \flag(d+1)$ is the weighted chordal flag mean of $\{ [\![ \X^{(i)}]\!]\}_{i=1}^p \in \flag(d+1)$ with weights $w_i(\Z)$.
 Note: $\epsilon = 0$ as long as $d_c([\![\X^{(i)}]\!], [\![\Z]\!]) \neq 0$ for all $i$.
\end{prop}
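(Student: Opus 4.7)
The plan is to reduce the claim to a direct rewrite of the weighted flag-mean objective, reusing Proposition I. Because $[[\Z]]$ is fixed, each $w_i([[\Z]])$ is a nonnegative constant, so setting $\alpha_i := w_i([[\Z]])$ turns the expression in~\cref{eq: median equiv app} into an affine transform of the weighted chordal flag-mean objective from~\cref{eq: chordal flag mean opt}. The main task is just to check that the two objectives agree on their $[[\Y]]$-dependent parts up to an additive constant and a positive multiplicative constant.

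First, I would expand the weighted chordal flag-mean objective with weights $\alpha_i = w_i([[\Z]])$ using the definition of $d_c^2$ in~\cref{eq: chordal distance}:
\begin{equation*}
\sum_{i=1}^p w_i([[\Z]])\, d_c([[\X^{(i)}]],[[\Y]])^2 = \sum_{i=1}^p \sum_{j=1}^k w_i([[\Z]])\bigl(m_j - \tr({\X_j^{(i)}}^T \Y_j \Y_j^T \X_j^{(i)})\bigr).
\end{equation*}
Applying the cyclic property of the trace to move $\Y_j\Y_j^T$ inside gives $\tr(\Y_j^T \X_j^{(i)}{\X_j^{(i)}}^T \Y_j)$, matching the trace term in~\cref{eq: median equiv app}.

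Next I would compare the $m_j$ contributions. The flag-mean objective contributes $\sum_{i,j} w_i([[\Z]]) m_j = \bigl(\sum_i w_i([[\Z]])\bigr)\sum_j m_j$, while the expression in~\cref{eq: median equiv app} contributes $\sum_{i,j} m_j = p \sum_j m_j$. Both are constants independent of $[[\Y]]$, so they can be dropped from the $\argmin$. What remains in each case is exactly $-\sum_{i=1}^p w_i([[\Z]]) \sum_{j=1}^k \tr(\Y_j^T \X_j^{(i)}{\X_j^{(i)}}^T \Y_j)$, so the two minimization problems share the same set of optimizers.

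Finally, I would invoke Proposition I (the flag-mean proposition) with the weights $\alpha_i := w_i([[\Z]])$ to identify this common minimizer with the weighted chordal flag-mean of $\{[[\X^{(i)}]]\}_{i=1}^p$. Since $d_c([[\X^{(i)}]],[[\Z]])\ne 0$ for all $i$ by hypothesis, we can take $\epsilon = 0$ and the weights are well-defined real numbers, so Proposition I applies verbatim. There is no substantive obstacle here; the only thing to be careful about is ensuring that the constant term $p\sum_j m_j$ versus $\bigl(\sum_i w_i([[\Z]])\bigr)\sum_j m_j$ difference really is $[[\Y]]$-independent, which it is since all weights are fixed once $[[\Z]]$ is fixed.
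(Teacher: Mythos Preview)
Your proposal is correct and follows essentially the same approach as the paper: both rearrange the sums, observe that the $m_j$ terms are $[[\Y]]$-independent constants (so they can be dropped from the $\argmin$), and then invoke Proposition~I to identify the remaining minimization with the weighted chordal flag-mean problem with weights $w_i([[\Z]])$. You are slightly more explicit than the paper about the discrepancy between the constant terms $p\sum_j m_j$ and $\bigl(\sum_i w_i([[\Z]])\bigr)\sum_j m_j$, but this is a cosmetic difference, not a methodological one.
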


\begin{proof}
By re-arranging the summations in Eq.~\ref{eq: median equiv app}, we see its minimizer is also
 \begin{equation*}
 \argmin_{[\![\Y]\!] \in \flag(d+1)}\sum_{j=1}^k  m_j - \sum_{j=1}^k \sum_{i=1}^p w_i(\Z) \tr\left( \Y_j^{\top} \X_j^{(i)}{\X_j^{(i)}}^{\top} \Y_j \right).
\end{equation*}
We showed that this is the same as the chordal flag-mean optimization problem with weights $w_i(\Z)$ in the proof of Prop.~\ref{prop: flag mean app}.
\end{proof}

\section{Proof of Proposition VI}
\begin{prop}
Let $[\![\Y]\!] \in \flag(d+1)$ and $\epsilon > 0$. Assume that $d([\![\Y]\!],[\![\X^{(i)}]\!]) > \epsilon$ for $i = 1,2, \dots, p$. Denote the flag median objective function value as $f:\flag(d+1) \rightarrow \R$ and an iteration of our chordal flag-median IRLS algorithm as $T:\flag(d+1) \rightarrow \flag(d+1)$. Then
\[
f(T([\![\Y]\!])) \leq f([\![\Y]\!]).
\]
\end{prop}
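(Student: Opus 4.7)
The plan is to build the familiar majorization/minorization function $h([[\Z]],[[\Y]]) := \sum_{i=1}^p w_i([[\Z]])\, d_c([[\X^{(i)}]],[[\Y]])^2$ and wrap the iteration in a standard Weiszfeld-style sandwich. First I would record two elementary facts. By the hypothesis $d_c([[\Y]],[[\X^{(i)}]]) > \epsilon > 0$, the $\max$ in the definition of $w_i$ is inactive, so $w_i([[\Y]]) = \alpha_i / d_c([[\X^{(i)}]],[[\Y]])$ and therefore
\[
h([[\Y]],[[\Y]]) \;=\; \sum_{i=1}^p \alpha_i\, d_c([[\X^{(i)}]],[[\Y]]) \;=\; f([[\Y]]).
\]
Second, by \cref{prop: irls iteration}, one IRLS step produces $T([[\Y]])$ as the weighted chordal flag-mean of $\{[[\X^{(i)}]]\}$ with fixed weights $w_i([[\Y]])$, i.e., as a minimizer of $h([[\Y]],\cdot)$ over $\flag(d+1)$. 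Hence
\[
h([[\Y]], T([[\Y]])) \;\leq\; h([[\Y]], [[\Y]]) \;=\; f([[\Y]]).
\]

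Next I would invoke the scalar inequality $2a - b \leq a^2/b$, valid whenever $b > 0$ since it rearranges to $(a-b)^2 \geq 0$. Applying this termwise with $a = d_c([[\X^{(i)}]], T([[\Y]]))$ and $b = d_c([[\X^{(i)}]], [[\Y]])$ (strictly positive by hypothesis), multiplying by $\alpha_i \geq 0$, and summing over $i$ gives
\[
2 f(T([[\Y]])) - f([[\Y]]) \;\leq\; \sum_{i=1}^p \frac{\alpha_i\, d_c([[\X^{(i)}]], T([[\Y]]))^2}{d_c([[\X^{(i)}]], [[\Y]])} \;=\; h([[\Y]], T([[\Y]])).
\]
Chaining with the previous display yields $2 f(T([[\Y]])) - f([[\Y]]) \leq f([[\Y]])$, i.e., $f(T([[\Y]])) \leq f([[\Y]])$, the desired monotone-decrease statement.

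The main obstacle I anticipate is justifying that $T$ truly returns an exact minimizer of $h([[\Y]],\cdot)$ rather than an approximate one; this is handled for me by \cref{prop: irls iteration}, which reduces the subproblem to a weighted chordal flag-mean solved by the Stiefel RTR routine from \cref{alg:chordalavg}. A secondary concern is division by the distance when some $[[\Y]]$ happens to coincide with a sample $[[\X^{(i)}]]$, but the strict inequality $d_c([[\Y]],[[\X^{(i)}]]) > \epsilon$ in the hypothesis eliminates this; relaxing the hypothesis only adds the $p\epsilon/2$ slack noted in the subsequent remark, absorbed by replacing $b$ with $\max\{b,\epsilon\}$ in the scalar inequality without altering the structure of the argument.
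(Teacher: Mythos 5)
Your proof is correct and follows essentially the same route as the paper: define the majorizing surrogate $h$, observe that $T([[\Y]])$ minimizes the fixed-weight flag-mean objective, note that $h$ agrees with $f$ on the diagonal, and apply the scalar inequality $2a-b\leq a^2/b$ termwise before chaining the two bounds. The only cosmetic differences are that you transpose the two arguments of $h$ relative to the paper's appendix and you carry the weights $\alpha_i$ explicitly (which the appendix implicitly sets to $1$); neither affects the argument.
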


\begin{proof}
Assuming that $d([\![\Y]\!],[\![\X^{(i)}]\!]) > \epsilon$ for $i = 1,2, \dots, p$, we define the function $h: \flag(d+1) \times \flag(d+1) \rightarrow \R$ as
\begin{align*}\label{eq: h flagirls def}
\begin{aligned}
h([\![\Z]\!], [\![\Y]\!]) &= \sum_{i=1}^p w_i([\![\Y]\!]) d_c([\![\Z]\!],[\![\X^{(i)}]\!])^2,\\  
w_i([\![\Y]\!]) &=  \frac{1}{\max \left\{ d_c([\![\Y]\!],[\![\X^{(i)}]\!]), \epsilon \right \} } \\
&= \frac{1}{ d_c([\![\Y]\!],[\![\X^{(i)}]\!]) }.
\end{aligned}
\end{align*}

Some algebra reduces $h([\![\Z]\!], [\![\Y]\!])$ to
\begin{align*}
    h([\![\Z]\!], [\![\Y]\!]) &= \sum_{i=1}^p w_i([\![\Y]\!]) d_c([\![\Z]\!],[\![\X^{(i)}]\!])^2,\\
    &= \sum_{i=1}^p \frac{d_c([\![\Z]\!],[\![\X^{(i)}]\!])^2}{ d_c([\![\Y]\!],[\![\X^{(i)}]\!]) }.\\
\end{align*}

$h([\![\Z]\!], [\![\Y]\!])$ is the weighted flag-mean objective function (of $\{[\![\X^{(i)}]\!]\}_i$) with weights $w_i([\![\Y]\!])$. So minimizing $h([\![\Z]\!], [\![\Y]\!])$ over $[\![\Z]\!]$ is an iteration of our IRLS algorithm to compute the flag-median. In other words,

\begin{equation}\label{eq: h flagirls min}
    T([\![\Y]\!]) = \argmin_{[\![\Z]\!] \in \flag(d+1)} h([\![\Z]\!], [\![\Y]\!]).
\end{equation}
Using Eq.~\ref{eq: h flagirls min}, we have
\begin{equation*}
    h(T([\![\Y]\!]), [\![\Y]\!]) \leq h([\![\Y]\!], [\![\Y]\!]).
\end{equation*}

By the definition of $h$
\begin{align*}
    h([\![\Y]\!], [\![\Y]\!]) &= \sum_{i=1}^p \frac{d_c([\![\Y]\!],[\![\X^{(i)}]\!])^2}{d_c([\![\Y]\!],[\![\X^{(i)}]\!])}, \\
    &= \sum_{i=1}^p d_c([\![\Y]\!],[\![\X^{(i)}]\!]),\\
    &= f([\![\Y]\!]).
\end{align*}
This means, we have
\begin{equation}\label{eq: h flagirls less}
    h(T([\![\Y]\!]),  [\![\Y]\!]) \leq f([\![\Y]\!]).
\end{equation}

Now we use the identity from algebra: $\frac{a^2}{b} \geq 2a-b$ for any $a,b \in \R$ and $b > 0$. Let 
\begin{equation*}
    a = d_c([\![\Z]\!],[\![\X^{(i)}]\!]) \text{ and } b =  d_c([\![\Y]\!],[\![\X^{(i)}]\!]) .
\end{equation*} 
Then
\begin{align*}
h([\![\Z]\!], [\![\Y]\!]) &\geq 2\sum_{i=1}^p d_c([\![\Z]\!],[\![\X^{(i)}]\!])\\
&- \sum_{i=1}^p  d_c([\![\Y]\!],[\![\X^{(i)}]\!]),  \\
&= 2f([\![\Z]\!]) - f([\![\Y]\!]).
\end{align*}
Now, take $[\![\Z]\!] = T([\![\Y]\!])$. This gives us
\begin{equation}\label{eg: h flagirls greater}
    h(T([\![\Y]\!]), [\![\Y]\!]) \geq 2f(T([\![\Y]\!]))-  f([\![\Y]\!]).
\end{equation}

Then, combining Eq.~\ref{eg: h flagirls greater} with Eq.~\ref{eq: h flagirls less}, we have

\begin{align*}
    2f(T([\![\Y]\!]))- f([\![\Y]\!]) &\leq f([\![\Y]\!]), \\
    f(T([\![\Y]\!])) &\leq f([\![\Y]\!]) . 
\end{align*}
\end{proof}

\section{Further Experimental Evaluation}
\subsection{Further Qualitative Results on Faces Dataset}
We now show in Fig.~\ref{fig:sra12} further visualizations of Flag and Grassmann averages of faces.  
\begin{figure}[t]
        \includegraphics[width=\columnwidth]{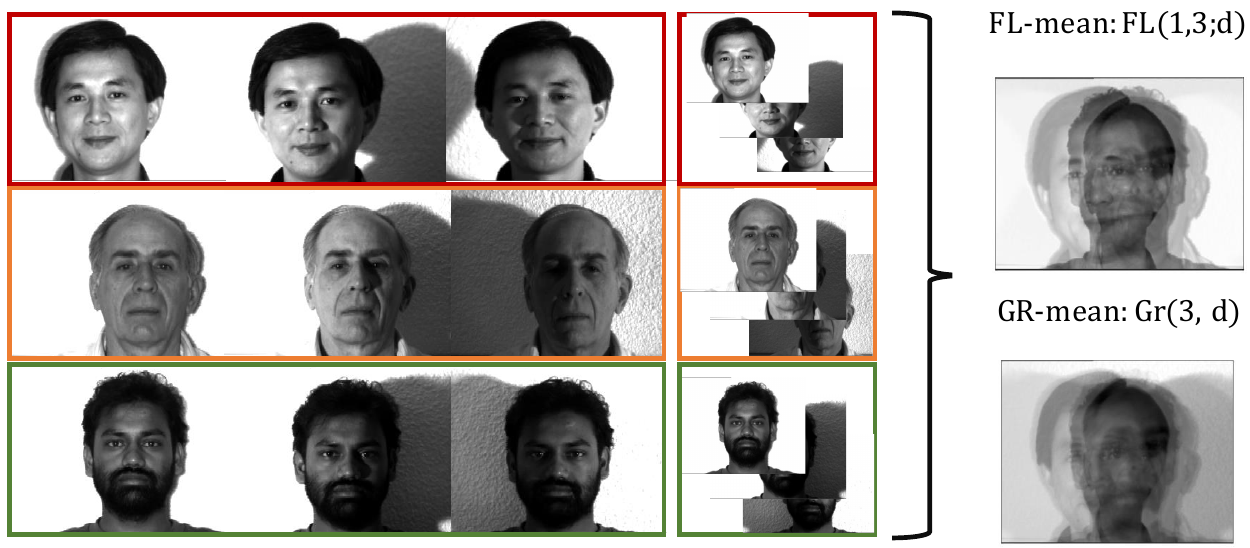}
	\caption{Averaging a collection of faces belonging to three different identities, captured under varying illumination: center, left and right. Notice that the first dimension of the flag representations is center illuminated, better representing the mean compared to Grassmannian.\vspace{-3mm}}
	\label{fig:sra12}
\end{figure}

\subsection{Further Qualitative Results on MNIST}
We use $20$ examples (e.g., points on $\flag(1,2;748)$) of $6$s and add $10$ examples of $7$s. We use the same workflow from the manuscript to represent the MNIST digits on $\Gr(2,748)$ and $\flag(1,2;748)$. We compute the averages on the Grassmannian~\cite{draper2014flag, mankovich2022flag} and flag (ours). The reshaped first dimension of each of these averages is in Fig.~\ref{fig:mnist_qual}. The brightness of the bottom left corner of each image is brighter the more present the $7$s digit (outlier class) is in the image. Notice the bottom left corner of each image, boxed in red, becomes darker as we move from left to right. So, our averaging on the flag is more robust to outliers than Grassmannian averaging. In fact, the bottom left corner of the flag-median is the darkest. Therefore, our flag-median is the least affected by the outlier examples of $7$s.
\begin{figure}[t]
    \includegraphics[width=\columnwidth]{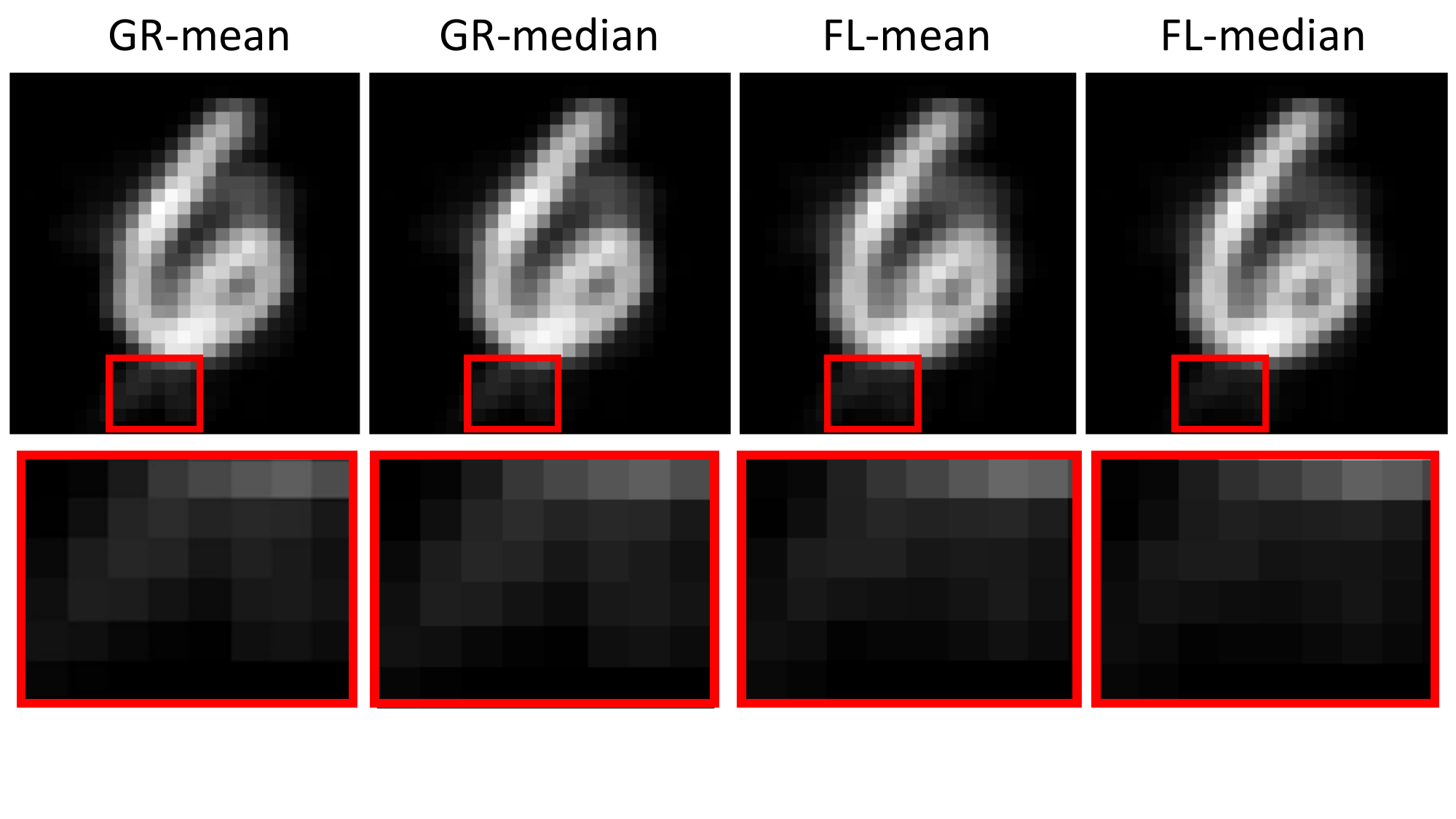}
	\caption{The first dimension of Grassmannian (``GR-'') and flag (``FL-'') averages of a data set with $20$ representations of $6$s and $10$ representations of $7$s. The bottom red boxes are the enlarged version of the upper image. Our flag-median is the least affected by the outlier examples of $7$s.}
	\label{fig:mnist_qual}
\end{figure}

\subsection{LBG Clustering on UFC YouTube}
 We use a subset of the UCF YouTube Action dataset~\cite{liu2009recognizing} to run a similar experiment to what was done by Mankovich~\etal \cite{mankovich2022flag}. The dataset contains labeled RGB video clips of people performing actions. Within each labeled action, the videos are grouped into subsets of clips with common features. We take approximately one example from each subset from each class. This results in $23$ examples of basketball shooting, $23$ of biking/cycling, $25$ of diving, $25$ of golf swinging, $24$ of horse back riding, $25$ of soccer juggling, $23$ of swinging, $24$ of tennis swinging, $24$ of trampoline jumping, 24 of volleyball spiking, and $24$ of walking with a dog. We convert these frames to gray scale, then we use INTER\_AREA interpolation from the OpenCV package~\cite{opencv_library} to resize the frames to have only $450$ pixels. This is, on average, only $1\%$ of the number of pixels in the original frame. We vectorize and horizontally stack each video, then use the first $10$ columns of $\mathbf{Q}$ from the QR decomposition to realize each video as a point on $\Gr(10,450)$ and $\flag(1,2,\dots,10;450)$.

 We run Linde-Buzo-Gray (LBG) clustering on these videos and report the resulting cluster purities in Fig.~\ref{fig:lbgres}. Clustering on the flag manifold with our flag averages (blue boxes) improves cluster purities over Grassmannian methods. We also see higher variance in cluster purities for flag methods. Even though we are only working with approximately $1\%$ of the total number of pixels in each frame, we are able to produce cluster purities that are competitive with those reported in~\cite{mankovich2022flag} using a similar set of videos. Specifically, our flag-LBG clustering is well within $0.1$ of the highest cluster purities reported in~\cite{mankovich2022flag}. Overall, our flag methods improve cluster purities in a head-to-head experiment while remaining competitive with Grassmannian LBG with only using approximately 1\% of pixels per frame.
\begin{figure}[t]
        \includegraphics[width=\columnwidth]{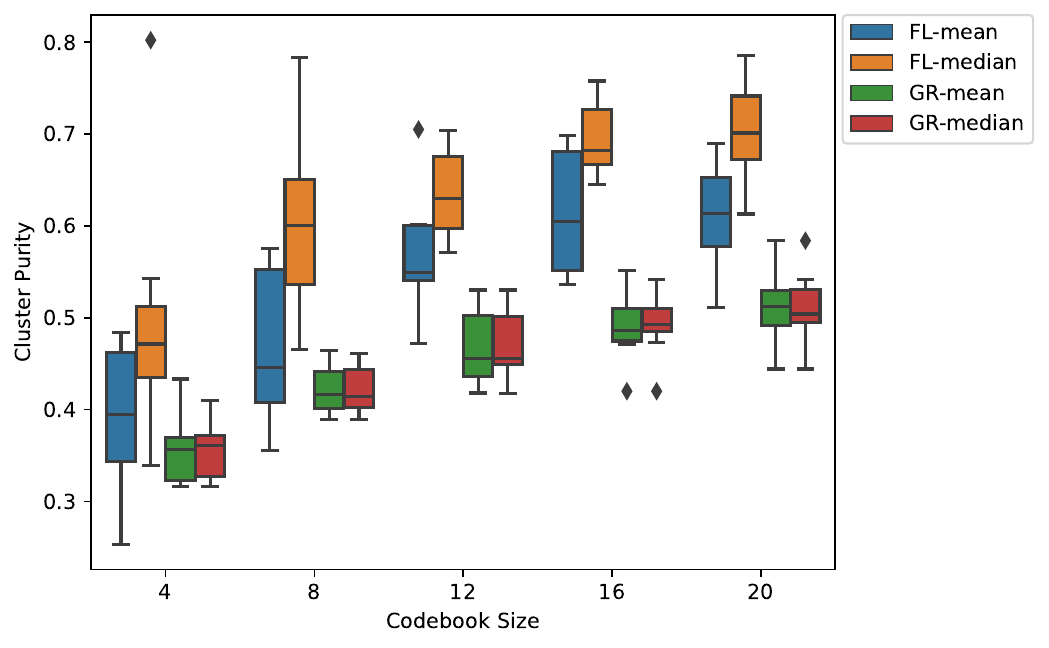}
	\caption{LBG cluster purities of YouTube videos with $10$ experiments with different numbers of centers, codebook sizes. The Grassmannian, \eg ``GR-'', boxes are results from LBG clustering using chordal distance and Grassmannian averages from~\cite{draper2014flag,mankovich2022flag}. The ``FL-" boxes are results from using the flag chordal distance and our flag-mean and -median.\vspace{-3mm}}
	\label{fig:lbgres}
\end{figure}

\subsection{Ablation Studies}
\paragraph{Robustness to initialization}
For Fig.~\ref{fig:init_ablation}, we fix a single-cluster dataset of $100$ points on $\flag(1,2,3;10)$ then run our IRLS algorithm for the flag-median and Stiefel RTR~\cite{absil2007trust,boumal2014manopt} for the flag-mean with initial points that are further and further away from the center of the dataset. Our dataset is computed the same way we compute synthetic datasets for the manuscript: compute a center, $[\![\mathbf{C}]\!] \in \flag(1,2,3;10)$, and then add noise to the center using the parameter $\delta$. For this experiment we use $\delta = .2$. Our initial point for our IRLS algorithm and RTR is computed as the first $3$ columns of the QR decomposition of $\mathbf{C}  + \Z \delta_{init}$ where $\Z \in \R^{10 \times 3}$ has entries sampled from $\mathcal{U}[-.5,.5)$. We call $\delta_{init}$ the noise added to the initial point and plot it on the $x$-axis of Fig.~\ref{fig:init_ablation}. The ``Error" is the chordal distance on $\flag(1,2,3;10)$ between the center and the algorithm output. ``Iterations" is the number of iterations of RTR for the flag-mean and IRLS for the flag-median until convergence. ``Cost" is the objective function values of the algorithm output. Our IRLS algorithm estimates the flag-median is further away from the center, $[\![\mathbf{C}]\!]$ than the flag-mean estimate. Also, the number of iterations of Stiefel RTR increases as we move the center further away from our dataset whereas our IRLS algorithm number of iterations remains constant. Finally, the cost value for the flag-median estimate is higher than the flag-mean estimate because the flag-mean estimate objective function likely contains squares of values less than $1$.

\begin{figure}[t]
    \includegraphics[width=\linewidth]{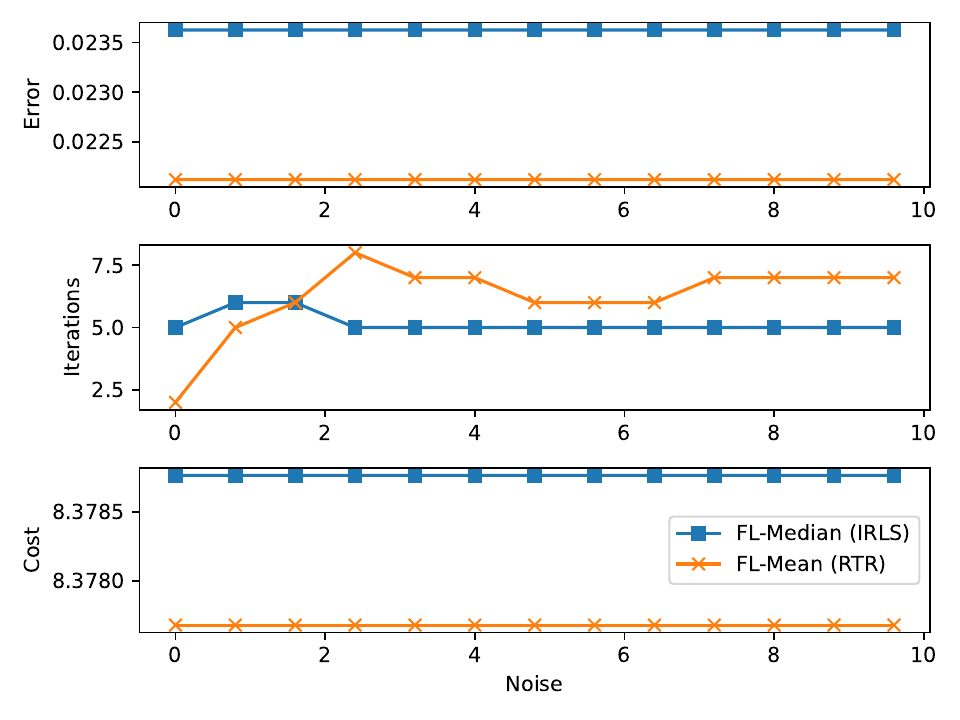}
    \caption{A plot of the robustness of our IRLS algorithm for the flag-median and Stiefel RTR for the flag-mean to initialization. For the median, we report the IRLS-iterations whereas for the mean, we report the RTR-iterations. Note that, even in large noise variances, both of the algorithms converge to a reasonable point regardless of initialization.}
    \label{fig:init_ablation}
\end{figure}

\paragraph{Computation time}

We conducted the further experiments with ambient dimension and ``dimension gap'' and plot the runtime of Alg. 1 (Fl-Mean) and Alg. 2 (Fl-Median) in Fig.~\ref{fig:times}. As shown, the runtime increases linearly with dimension and decreases linearly with dimension gap. The FL-Mean is less affected than the FL-Median when changing $d$ or $d-k$. For high $d$, the runtime for the FL-Median is unstable with high standard deviation and high changes in the mean runtime across small changes in $d$. In contrast, the FL-Mean is relatively stable in run-time to increasing $d$. When we vary dimension gap ($d-k$), there is a negligible standard deviation in runtime for the FL-Mean and FL-Median. The FL-Median algorithm is very slow for low $d-k$ and as fast as the FL-Mean for high $d-k$. The FL-Mean algorithm runtime is more stable to changes in dimension gap than the FL-Median.

\begin{figure}[t]
        \includegraphics[width=\columnwidth]{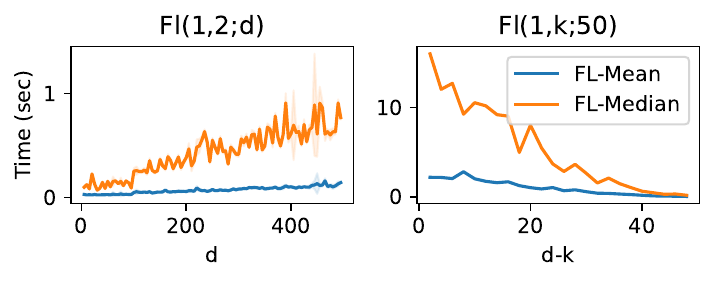}\vspace{-5mm}
	\caption{Time to compute the chordal flag-mean and -median of $10$ points over $20$ random trials. The shaded region is the standard deviation. We vary $d$ in $\flag(1,2;d)$ (left) and vary $d-k$ in $\flag(1,k;d=50)$ (right).\vspace{-4.5mm}}
	\label{fig:times}
\end{figure}


\subsection{Motion Averaging}

\paragraph{On error metrics}
We score the quality of our averages using the geodesic distance on the pose manifold $SE(3)$ (or equivalently the geodesic distance on dual quaternions):
\begin{equation}\label{eq:se3error}
    \epsilon(\mathbf{T}_1,\mathbf{T}_2) = {\frac{1}{\pi}\|\log(\mathbf{R_1}^\top\mathbf{R}_2)\|_2+\lambda_T\|\mathbf{t}_1-\mathbf{t}_1\|_2}
\end{equation}
where $(\mathbf{R}_i,\mathbf{t}_i)$ are extracted from $\mathbf{T}_i$ as rotational and translational components, respectively. $\lambda_T$ is a scene dependent strictly positive scaling factor. Note that, as discussed in the paper, this is very much related to the $\lambda$ used in motion contraction. $\log(\cdot):SO(3)\to \mathfrak{so}(3)$ denotes the \emph{logarithmic map} of the $SO(3)$-manifold. As such, this residual defined in Fig.~\ref{eq:se3error} is equivalent to:
\begin{equation}\label{eq:tracedist}
    \epsilon(\mathbf{T}_1,\mathbf{T}_2) = {\frac{1}{\pi}\arccos\left(\mathrm{tr} \frac{\mathbf{R}_1^\top\mathbf{R}_2 -1}{2} \right)+\lambda_s\|\mathbf{t}_1-\mathbf{t}_2\|_2}.\nonumber
\end{equation}

\paragraph{Single rotation averaging}
Single rotation averaging where a set of rotation matrices are averaged, is a special case of motion averaging where the translational components are set to zero. Due to the compactness of the manifold, additional $SO(3)$-specific averaging algorithms can be employed for the case of pure rotations. To compare our method against a larger class of well established, rotation-specific averaging algorithms we opt for zeroing the translational components, performing averages and reporting only the angular errors. Figs.~\ref{fig:rotavg1},\ref{fig:rotAvgRobust} present our results with increasing noise and increasing outliers respectively. For the case of outliers, we further include the recent robust methods of Rie \& Civera~\cite{lee2020robust}. \emph{Naive} refers to the Euclidean averages of rotation matrices (with a subsequent projection). 

\paragraph{Impact of $\lambda$}
As we have discussed in the paper, the scene-dependent scaling $\lambda$ is a hyper-parameter in our $SE(3)$-averaging. Note that, other distance metrics such as the ones dependent on 3D point distances exist~\cite{bregier2018defining}. These metrics exploit the action of 3D transformations on an auxiliary point set to measure distances in the 3D configuration of points. However, even those are somehow dependent upon a hyper-parameter such as the diameter of the point set or the point configurations. This is why we evaluate the impact of $\lambda$ in our averaging. In particular, we design multiple experiments to average $250$ random points on $SE(3)$ generated with an angular noise level of $0.075$. The radius of this scene is set to $1$, up to a translational noise level of $0.15$. This also means that the optimal $\lambda^\star$ (unknown during test) is $1$. We then vary $\lambda\in [0.002, 0.025, 0.1, 0.25, 0.5, 0.75, 1, 1.25, 1.5, 1.8, 2, 2.25,\\2.5]$ and average $250$ random points, over $50$ runs. In each run, the point sets differ randomly. We compute the errors using Eq.~\ref{eq:tracedist} with $\lambda=1$ and accumulate them over all runs. Fig.~\ref{fig:lambdaexp} plots the average errors per each $\lambda$. In this outlier-free regime, our flag-mean and flag-median are almost aligned when $\lambda=1$. Flag-median shows slight advantage over the mean for smaller values of $\lambda$.
\begin{figure}[t]
        \includegraphics[width=\columnwidth]{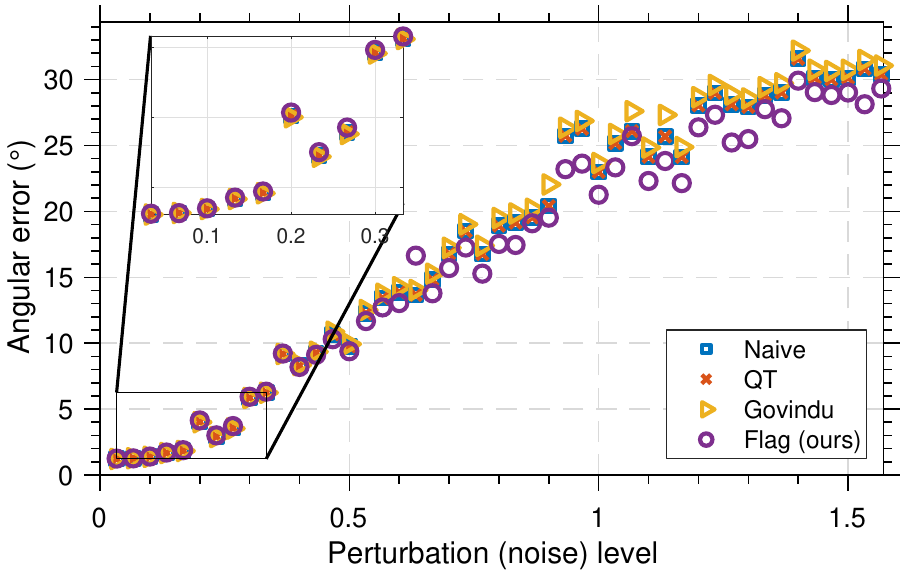}
	\caption{Single rotation averaging results for increasing levels of axial noise on synthetic, outlier-free data.}
	\label{fig:rotavg1}
\end{figure}
\begin{figure}[t]
        \includegraphics[width=\columnwidth]{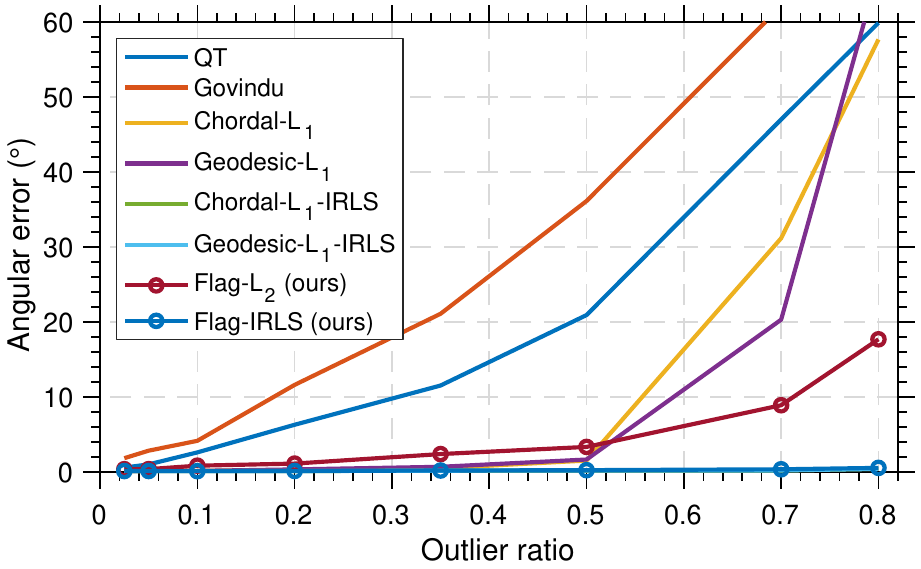}
	\caption{Single rotation averaging results for increasing levels of outliers on synthetic data with $<5^\circ$ of noise.}
	\label{fig:rotAvgRobust}
\end{figure}

\begin{figure}[ht]
    \includegraphics[width=\linewidth]{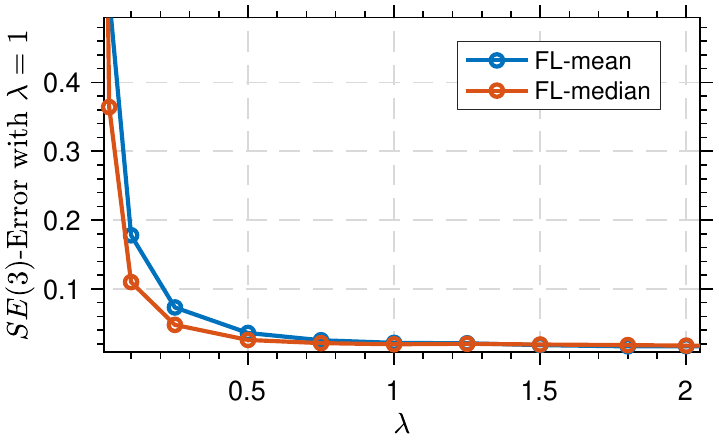}
    \caption{Impact of $\lambda$ our flag-mean and flag-median. During data generation we use $\lambda^\star=1$, whereas our algorithms use varying $\lambda$ as plotted in the $x$-axis. We then compare the resulting averages to the ground truth average and report the deviation. This is an outlier-free regime and as expected, median \& mean prototypes overlap when we are at the optimal value, $\lambda=1$. Though, we also see that our algorithms are not too sensitive to the exact choice of this parameter.\vspace{-3mm}}
    \label{fig:lambdaexp}
\end{figure}

\section{On Motion \& Rotation Averaging}
Motion averaging lies at the heart of structure from motion and 3D reconstruction in multi-view settings. Typically, the problem of recovering individual motions for a set of cameras when we are given a number of relative motion estimates between camera pairs is known as \emph{multiple motion averaging} or \emph{transformation / motion synchronization}~\cite{huang2019learning,arrigoni2016spectral,birdal2018bayesian,dellaert2020shonan,eriksson2019rotation,chatterjee2017robust,govindu2001combining,govindu2004lie}. This problem is foundational for 3D structure recovery. Synchronization algorithms usually solve multiple \emph{single averaging} sub-problems robustly~\cite{birdal2020synchronizing,li2020hybrid,hartley2011l1}, hence the name multiple motion averaging. These sub-problems involving the computation of a robust-barycenter of a set of points on $SE(3)$, are commonly known as \emph{robust single motion averaging} and is the focus of our paper. Although our method directly operates on the product manifold, it is a de-facto standard to decompose the problem into \emph{single translation averaging} and \emph{single rotation averaging}. The lattter is particularly well studied due to the interesting mathematical structure of the problem~\cite{hartley2013rotation,hartley2011l1,lee2020robust,govindu2004lie}. Nevertheless, our method is general enough to solve all of these variants, as we have experimented with in Figs.~\ref{fig:rotavg1},\ref{fig:rotAvgRobust}.

\end{document}